\def\eqref#1{equation~\ref{#1}}
\def\1{\bm{1}}
\DeclareMathAlphabet{\mathsfit}{\encodingdefault}{\sfdefault}{m}{sl}
\SetMathAlphabet{\mathsfit}{bold}{\encodingdefault}{\sfdefault}{bx}{n}
\definecolor{my-full-blue}{HTML}{1F77B4}
\definecolor{my-full-orange}{HTML}{FF7F0E}
\definecolor{my-full-green}{HTML}{2CA02C}
\definecolor{my-full-red}{HTML}{d62728}
\definecolor{my-full-purple}{HTML}{9467bd}
\colorlet{my-blue}{my-full-blue!30}
\colorlet{my-orange}{my-full-orange!30}
\colorlet{my-green}{my-full-green!30}
\colorlet{my-red}{my-full-red!30}
\colorlet{my-purple}{my-full-purple!30}
\definecolor{ckeyword}{HTML}{7F0055}
\definecolor{ccomment}{HTML}{3F7F5F}
\definecolor{cstring}{HTML}{2A0099}
\lstdefinestyle{numbers}{
	numbers=left,
	framexleftmargin=20pt,
	numberstyle=\tiny,
	firstnumber=auto,
	numbersep=1em,
	xleftmargin=2em
}
\lstdefinestyle{layout}{
	frame=none,
	captionpos=b,
}
\lstdefinestyle{comment-style}{
	morecomment=[l]//,
	morecomment=[s]{/*}{*/},
	commentstyle={\color{ccomment}\itshape},
}
\lstdefinestyle{string-style}{
	morestring=[b]",%
	morestring=[b]',%
	stringstyle={\color{cstring}},
	showstringspaces=false,%
}
\lstdefinestyle{keyword-style}{
	keywordstyle={\ttfamily\bfseries},
	morekeywords={
		function,
		constructor,
		int,
		bool,
		return,
		returns,
		uint
	},
	morekeywords = [2]{},
	keywordstyle = [2]{\text},
	sensitive=true,
}
\lstdefinestyle{input-encoding}{
	inputencoding=utf8,
	extendedchars=true,
	literate=
	{ℝ}{$\reals$}1%
	{→}{$\rightarrow$}1%
	{α}{$\alpha$}1%
	{β}{$\beta$}1%
	{λ}{$\lambda$}1%
	{θ}{$\theta$}1%
	{ϕ}{$\phi$}1%
}
\lstdefinestyle{escaping}{
	moredelim={**[is][\color{blue}]{\%}{\%}},
	escapechar=|,
	mathescape=true
}
\lstdefinestyle{default-style}{
	basicstyle=\fontencoding{T1}\ttfamily\footnotesize,
	style=numbers,
	style=layout,
	style=comment-style,
	style=string-style,
	style=keyword-style,
	style=input-encoding,
	style=escaping,
	tabsize=2,
	upquote=true
}
\lstdefinelanguage{BASIC}{
	language=C++,
	style=default-style
}[keywords,comments,strings]%
\newtheorem{definition}{Definition}
\newtheorem{theorem}{Theorem}
\newtheorem{lemma}{Lemma}
\newtheorem{corollary}{Corollary}
\newcolumntype{x}[2]{S[table-format=#1.#2,table-auto-round]}
\newcolumntype{M}{>{$}l<{$}}
\NewDocumentCommand{\phimodel}{O{}}{%
\textsc{Phi}\ifstrempty{#1}{}{-{#1}}\xspace
}
\newcommand{\crefrangeconjunction}{--}
\crefname{listing}{Lst.}{listings}
\crefname{line}{Lin.}{Lin.}
\crefname{appendix}{App.}{App.}
\newcommand{\app}[1]{%
	\ifbool{includeappendix}{\cref{#1}}{the appendix}%
}
\newcommand{\App}[1]{%
	\ifbool{includeappendix}{\cref{#1}}{The appendix}%
}
\icmltitlerunning{A Unified Approach to Routing and Cascading for LLMs}
\begin{document}
\sisetup{
text-series-to-math = true,
propagate-math-font = true
}
\twocolumn[
\icmltitle{A Unified Approach to Routing and Cascading for LLMs}

\icmlsetsymbol{equal}{*}

\begin{icmlauthorlist}
\icmlauthor{Jasper Dekoninck}{eth}
\icmlauthor{Maximilian Baader}{eth}
\icmlauthor{Martin Vechev}{eth}
\end{icmlauthorlist}

\icmlaffiliation{eth}{Department of Computer Science,
ETH Zurich, Switzerland}

\icmlcorrespondingauthor{Jasper Dekoninck}{jasper.dekoninck@inf.ethz.ch}

\icmlkeywords{LLMs, Routing, Cascading}

\vskip 0.3in
]
\printAffiliationsAndNotice{}

\begin{abstract}
    The availability of a wide range of large language models (LLMs) embedded in various agentic systems has significantly increased the potential of model selection strategies to improve the cost-performance tradeoff. Existing strategies involve either routing, where a single model is chosen per query, or cascading, which sequentially runs increasingly larger models until a satisfactory answer is found. However, current approaches face three key limitations: they (1) lack formal proofs of optimality, (2) fail to identify the conditions under which these strategies are most effective to improve the cost-performance tradeoff, and (3) are unable to combine both paradigms for further improvements. To address these issues, we first derive a novel optimal strategy for cascading and prove the optimality of an existing routing strategy. Further, we propose \emph{cascade routing}, a unified framework that integrates routing and cascading into a theoretically optimal strategy. Through our analysis, we identify good quality estimators as the critical factor for the success of model selection paradigms.  Finally, in our experiments, we show that cascade routing consistently outperforms the individual approaches by a large margin and we analyze quality estimators to determine when routing and/or cascading are useful paradigms for model selection.\footnote{Code available at \url{https://github.com/eth-sri/cascade-routing}}
\end{abstract}

\vspace{-6mm}
\section{Introduction}
\vspace{-1mm}
Large language models (LLMs) have found applications in a wide range of tasks, some of which are easily handled by small models, while others require the full capacity of state-of-the-art LLMs. This has led to the development of many fine-tuned models of various sizes that target specific tasks. To maximize performance, it is crucial to select the most suitable model for each query, accounting for both the expected quality of the model's output and the model's cost. Such model selection strategies can significantly improve performance over any individual model and can reduce inference costs by selecting a smaller model when the query does not require the full capacity of a larger model. 

\vspace{-1mm}
\paragraph{Routing and Cascading} Two primary strategies have been proposed to solve model selection. The first, routing, directs each input query to a specific model from a set of available models \citep{routing_proofchen2022,routing_liu2024}, as illustrated in \cref{fig:routing}. This approach is particularly effective when different expert LLMs are needed for diverse tasks, enabling the selection of the most suitable expert for each query. The second strategy, cascading, processes an input query through a sequence of increasingly larger models, stopping when a model produces an answer deemed sufficiently good \citep{frugalgpt,cascadingvarshney2022}, as illustrated in \cref{fig:cascade}. Cascading is particularly valuable for handling queries of varying difficulty, as it allows simpler queries to be addressed by smaller models while reserving more complex queries for larger models.
\begin{figure*}[t]
    \centering
    \begin{subfigure}[t]{0.32\textwidth}
        \centering
        \resizebox{0.53\linewidth}{!}{
            \begin{tikzpicture}
    \input{figures/overall_style.tex}

    \node[queryNode] (query) at (0, 0) {Query};

    \node[decisionNode, anchor=north, yshift=-5mm, minimum width=20mm] (decision) at (query.south) {Router};

    \draw[myArrow] (query) -- (decision);

    \node[modelNode, anchor=north, yshift=-5mm, 
    minimum width=8mm, minimum height=6mm] (model2) at (decision.south) {
        $m_2$
    };
    \node[modelNode, anchor=west, xshift=2mm, minimum width=8mm, minimum height=6mm] (model3) at (model2.east) {
        $m_3$
    };
    \node[modelNode, anchor=east, xshift=-2mm, 
    minimum width=8mm, minimum height=6mm] (model1) at (model2.west) {
        $m_1$
    };

    \draw[myArrow] (decision.south) -- (model1.north);

    \node[stopNode, anchor=north, yshift=-5mm, minimum width=12mm, minimum height=6mm] (stop1) at (model2.south) {
        Stop
    };

    \draw[myArrow] (model1.south) -- (stop1.north);

    \draw[myDottedArrow] (decision.south) -- (model2.north);
    \draw[myDottedArrow] (model2.south) -- (stop1.north);
    \draw[myDottedArrow] (decision.south) -- (model3.north);
    \draw[myDottedArrow] (model3.south) -- (stop1.north);

    \node[yshift=-3.6cm] () at (query.south) {};

\end{tikzpicture}
        }
        \vspace{-2mm}
        \caption{Routing}
        \label{fig:routing}
    \end{subfigure}
    \hfill
    \begin{subfigure}[t]{0.31\textwidth}
        \centering
        \resizebox{0.59\linewidth}{!}{
            \begin{tikzpicture}
    \input{figures/overall_style.tex}

    \node[queryNode, minimum width=12mm] (query) at (0, 0) {Query};

    \node[modelNode, anchor=north, yshift=-3mm, minimum width=8mm, minimum height=6mm] (model) at (query.south) {
        $m_1$
    };

    \draw[myArrow] (query.south) -- (model.north);

    \node[decisionNode, anchor=north, yshift=-3mm, minimum width=20mm] (decision2) at (model.south) {Cascader};

    \draw[myArrow] (model.south) -- (decision2.north);

    \node[modelNode, anchor=north, yshift=-3mm, minimum width=8mm, minimum height=6mm] (model2) at (decision2.south) {
        $m_2$
    };

    \node[stopNode, anchor=west, xshift=3mm, minimum width=8mm, minimum height=6mm] (stop2) at (decision2.east) {
        Stop
    };
    \draw[myArrow] (decision2.south) -- (model2.north);

    \node[decisionNode, anchor=north, yshift=-3mm, minimum width=20mm] (decision3) at (model2.south) {Cascader};

    \draw[myArrow] (model2.south) -- (decision3.north);

    \node[stopNode, anchor=west, xshift=3mm, minimum width=8mm, minimum height=6mm] (stop3) at (decision3.east) {
        Stop
    };

    \node[anchor=north, yshift=2mm, ] (dots) at (decision3.south) {\Huge $\vdots$};

    \draw[myArrow] (decision3.east) -- (stop3.west);

    \draw[myDottedArrow] (decision2.east) -- (stop2.west);

\end{tikzpicture}
        }
        \vspace{-2mm}
        \caption{Cascading}
        \label{fig:cascade}
    \end{subfigure}
    \hfill
    \begin{subfigure}[t]{0.32\textwidth}
        \centering
        \resizebox{0.72\linewidth}{!}{
            \begin{tikzpicture}
    \input{figures/overall_style.tex}

    \node[queryNode] (query) at (0, 0) {Query};

    \node[decisionNode, anchor=north, yshift=-3mm, minimum width=20mm] (decision) at (query.south) {Cascade Router};

    \node[modelNode, anchor=north, yshift=-3mm, 
    minimum width=8mm, minimum height=6mm] (model2) at (decision.south) {
        $m_2$
    };
    \node[modelNode, anchor=west, xshift=2mm, minimum width=8mm, minimum height=6mm] (model3) at (model2.east) {
        $m_3$
    };
    \node[modelNode, anchor=east, xshift=-2mm, 
    minimum width=8mm, minimum height=6mm] (model1) at (model2.west) {
        $m_1$
    };

    \draw[myArrow] (query.south) -- (decision.north);

    \node[stopNode, anchor=west, xshift=3mm, minimum width=12mm, minimum height=6mm] (stop) at (decision.east) {
        Stop
    };
    \draw[myArrow] (decision.south) -- (model1.north);

    \node[decisionNode, anchor=north, yshift=-3mm, minimum width=20mm] (decision2) at (model2.south) {Cascade Router};

    \draw[myArrow] (model1.south) -- (decision2.north);

    \node[modelNode, anchor=north, yshift=-3mm, 
    minimum width=8mm, minimum height=6mm] (model22) at (decision2.south) {
        $m_2$
    };
    \node[modelNode, anchor=west, xshift=2mm, minimum width=8mm, minimum height=6mm] (model32) at (model22.east) {
        $m_3$
    };
    \node[modelNode, anchor=east, xshift=-2mm, 
    minimum width=8mm, minimum height=6mm] (model12) at (model22.west) {
        $m_1$
    };

    \node[stopNode, anchor=west, xshift=3mm, minimum width=8mm, minimum height=6mm, minimum width=12mm] (stop2) at (decision2.east) {
        Stop
    };
    \draw[myArrow] (decision2.south) -- (model32.north);

    \draw[myDottedArrow] (decision.east) -- (stop.west);
    \draw[myDottedArrow] (decision.south) -- (model2.north);
    \draw[myDottedArrow] (model2.south) -- (decision2.north);
    \draw[myDottedArrow] (decision.south) -- (model3.north);
    \draw[myDottedArrow] (model3.south) -- (decision2.north);
    \draw[myDottedArrow] (decision2.south) -- (model22.north);
    \draw[myDottedArrow] (decision2.south) -- (model12.north);
    \draw[myDottedArrow] (decision2.east) -- (stop2.west);

    \node[anchor=north, yshift=2mm, ] (dots) at (model22.south) {\Huge $\vdots$};

\end{tikzpicture}
        }
        \vspace{-2mm}
        \caption{Cascade Routing (Ours)}
        \label{fig:cascade_router}
    \end{subfigure}
    \vspace{-2mm}
    \caption{Overview of three model selection strategies. Routing selects a single model for a query, cascading processes queries through a sequence of models, and cascade routing generalizes both.}
    \vspace{-4mm}
    \label{fig:overview}

\end{figure*}

\vspace{-1mm}
\paragraph{Restrictive Conditions} Despite their utility, both routing and cascading impose significant restrictions on the model selection process. In routing, the initial selection of a model is final, preventing any reconsideration after the initial decision. In cascading, each query must sequentially pass through all models in the chain, with no option to skip a model. Therefore, a less restrictive strategy that combines the strengths of both routing and cascading could offer significant performance improvements.

\vspace{-1mm}
\paragraph{Lack of Deeper Understanding} Further, the conditions under which current routing and cascading strategies are optimal, are not well understood. For routing, an extensive proof is required just to show that current strategies are \emph{close to} optimal \citep{routing_proofchen2022}, while the theoretical analysis of cascading does not provide optimality guarantees \citep{frugalgpt,cascadingvarshney2022}. This lack of theoretical understanding hinders the development of more effective model selection strategies. Moreover, prior work fails to provide insights into the limitations of model selection strategies and cannot identify the conditions under which they are useful in practical scenarios. For instance, it is widely believed that one needs models of various sizes to benefit from cascading \citep{frugalgpt,quality_cascadegupta2024,cascadingkhalili2022}, but we show that this notion is incorrect.

\vspace{-2mm}
\paragraph{This Work: Cascade Routing} To address these limitations, we first derive optimal routing and cascading strategies by framing them as linear optimization problems aimed at maximizing output quality while remaining within a given cost budget. For routing, this optimal strategy is close to the one obtained by prior work, while for cascading we derive a new strategy that is provably better than existing approaches. Building on this analysis, we propose a new paradigm called cascade routing, which generalizes both routing and cascading. As illustrated in \cref{fig:cascade_router}, cascade routing initially routes a query to any available model but keeps rerouting to different models until a model produces an answer of sufficient quality. We prove the optimality of cascade routing and show that it offers significantly more flexibility in processing a query.

\vspace{-2mm}
\paragraph{Importance of Quality Estimation} Our theoretical analysis enables a more thorough investigation into the factors that influence the effectiveness of model selection strategies. Specifically, we find that accurate estimates of model performance and response quality are most important. For routing, reliable \textit{ex-ante quality estimation}—the ability to predict whether a model will perform well on a given query—is essential. For cascading, robust \textit{post-hoc quality estimation}—the ability to evaluate the quality of a model's response after generation—is critical. Without it, the effectiveness of cascading strategies is severely limited even when models of various sizes are available.

\vspace{-2mm}
\paragraph{Results} We evaluate cascade routing on a range of tasks, demonstrating that it significantly outperforms both routing and cascading. Notably, cascade routing consistently outperforms other methods, improving performance by up to $8\%$ on the RouterBench benchmark \citep{routerbench} and by $14\%$ on the SWE-Bench benchmark \citep{swebench}. Further, we show that our new cascading strategy outperforms existing cascades in several scenarios by over $10\%$.

\vspace{-1mm}
\paragraph{Key Contributions} Our main contributions are: 
\vspace{-3mm}
\begin{itemize} 
    \setlength\itemsep{0.01em}
    \item We derive optimal strategies for routing and cascading and obtain a new cascading strategy that is provably better than prior approaches (\cref{sec:routing}, \cref{sec:cascading}).
    \item We introduce cascade routing, a new paradigm that combines the strengths of routing and cascading, and prove its optimality (\cref{sec:cascade_router}). 
    \item We conduct a thorough evaluation, demonstrating that cascade routing consistently outperforms the baselines, highlighting the critical role of quality estimation for the effectiveness of model selection (\cref{sec:experiments}). 
\end{itemize}

\vspace{-1mm}
\section{Routing as Linear Optimization}\label{sec:routing}

 We derive an optimal routing strategy to select the best model for a given query, providing detailed proofs for all statements in this section in \cref{app:routing}. 
\vspace{-1mm}
\paragraph{Brief Overview} In this section, we begin by defining a routing strategy as a function that maps queries to models. Next, we introduce the notation for quality and cost functions, demonstrating how the optimal routing strategy can be formulated to maximize a linear tradeoff between these two factors. Lastly, we  give an illustrative example and discuss the significance of quality estimation in routing and its impact on the effectiveness of the strategy.

\vspace{-1mm}
\paragraph{Routing} In routing, our goal is to develop a strategy that selects the best language model for a given input query. Formally, let $\mathcal{X}$ represent the distribution over all possible queries, and 
suppose we have $k$ language models $m_1, \dots, m_k$ available for routing. 
Further, let $\Delta_k$ denote the set of all probability distributions over $k$ variables. A routing strategy can then be defined as follows:

\begin{definition}[Routing]\label{def:routing}
    A routing strategy $s$ is a function $s \colon \mathcal{X} \rightarrow \Delta_k$ that maps a query $x \in \mathcal{X}$ to a probability distribution over models. $s_i(x)$ denotes the probability that $m_i$ is selected for query $x$. 
\end{definition}

A routing strategy selects a model by sampling from the distribution $s(x)$ for each query $x$. In prior work, routing strategies were restricted to be deterministic, i.e., $s_i(x) \in \{0, 1\}$ \citep{routing_proofchen2022,routerbench}. In contrast, we propose using a more general probabilistic strategy that enables a better solution and an easier theoretical analysis.

\vspace{-1mm}
\paragraph{Quality and Cost} In routing, we seek to maximize the output quality of the selected model while adhering to a given cost budget $B$. We define the quality function $q_i(x)$ as the output quality of model $m_i$ on query $x$, and the cost function $c_i(x)$ as the cost of running model $m_i$ on $x$. Quality could measure model accuracy, user preference, or any other performance indicator. Cost could measure either monetary costs or latency, depending on the use case.

However, since these functions are unknown in practice, we need estimators $\hat{q}_i(x)$ and $\hat{c}_i(x)$ that approximate the output quality and cost of querying model $m_i$ on input $x$. Estimators for $q_i(x)$ can be created using small classifiers trained to predict model accuracy, as done in prior work \citep{routerbench,routingschnitzer2023}. $\hat{c}_i(x)$ can be estimated by tokenizing the input query and determining the average output length of the model on a query. Then, we can use API-specific costs per token to estimate the cost of running a model on a query. Alternatively, we can also use average execution time as a proxy for cost.

\vspace{-1mm}
\paragraph{Optimal Routing} Using these estimators, we can formally define the optimal routing strategy:

\begin{definition}[Optimal Routing]\label{def:routing:optimal}
    The optimal routing strategy $s_{\textsc{opt}}$ for a given cost budget $B$ is the solution to the optimization problem that maximizes the expected output quality of the selected model while adhering to the budget:
    \vspace{-1mm}
    \begin{equation} \label{eq:routing}
    \begin{aligned}
        \max_{s}  \quad & \mathbb{E}_{x \in \mathcal{X}}\left(\sum\nolimits_{i=1}^k s_i(x)  \hat{q}_i(x)\right) \\ 
        \text{s.t.} \quad & \mathbb{E}_{x \in \mathcal{X}}\left(\sum\nolimits_{i=1}^k s_i(x) \hat{c}_i(x)\right) \leqslant B.
    \end{aligned}
    \end{equation}
\end{definition}
\vspace{-3mm}

We now explain how to solve this optimization problem. For a given query $x$, it can be shown (see \cref{app:routing}) that the optimal routing strategy selects the model maximizing the cost-quality tradeoff $\tau_i(x, \lambda) = \hat{q}_i(x) - \lambda \hat{c}_i(x)$. Here, $\lambda \in \mathbb{R}^+$ is a hyperparameter that controls the balance between quality and cost based on the budget $B$.

However, it can occur that several models achieve the same optimal cost-quality tradeoff for a given query. To address this, we define two deterministic strategies $s_\textsc{min}^\lambda(x)$ and $s_\textsc{max}^\lambda(x)$, which, respectively, select the cheapest and most expensive model that achieves the optimal tradeoff. The optimal routing strategy $s_\textsc{opt}$ is then determined by:
\begin{theorem}[Optimal Routing Strategy]\label{thm:routing:main}
For a cost budget $B$, there exists a $\lambda \in \mathbb{R}^+$ and a $\gamma \in [0, 1]$ such that the optimal routing strategy $s_\textsc{opt}$ equals $\gamma s_\textsc{min}^\lambda + (1-\gamma) s_\textsc{max}^\lambda$. 
\emph{\textbf{\cref{thm:routing:main}}, continued}. Furthermore, all routing strategies that have an expected cost that is exactly equal to $B$ and can be written as a convex combination of $s_\textsc{min}^{\lambda'}$ and $s_\textsc{max}^{\lambda'}$ for some $\lambda' \in \mathbb{R}^+$ achieve the same optimal quality.
\end{theorem}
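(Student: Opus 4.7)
Program~(\ref{eq:routing}) is a linear program over the convex set of routing strategies, so the natural attack is Lagrangian duality. For every $\lambda \geq 0$, define
\[
L(s,\lambda) \;=\; \mathbb{E}_{x,i}\bigl[s_i(x)\,\hat q_i(x)\bigr] - \lambda\Bigl(\mathbb{E}_{x,i}\bigl[s_i(x)\,\hat c_i(x)\bigr] - B\Bigr) \;=\; \mathbb{E}_{x}\Bigl[\textstyle\sum_i s_i(x)\,\tau_i(x,\lambda)\Bigr] + \lambda B.
\]
The Lagrangian decomposes across queries: for fixed $x$, the inner sum is maximized iff all probability mass sits on indices in $\argmax_j \tau_j(x,\lambda)$. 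Hence $s_\textsc{min}^\lambda$, $s_\textsc{max}^\lambda$, and every convex combination of them attain the pointwise maximum and thus the dual value $V(\lambda) := \mathbb{E}_x\bigl[\max_i \tau_i(x,\lambda)\bigr] + \lambda B$. Weak duality yields $\mathrm{OPT}(B) \leq V(\lambda)$ for every $\lambda \geq 0$, where $\mathrm{OPT}(B)$ denotes the maximum in~(\ref{eq:routing}).

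\textbf{Matching the budget.} Let $C_\textsc{min}(\lambda)$ and $C_\textsc{max}(\lambda)$ be the expected costs of $s_\textsc{min}^\lambda$ and $s_\textsc{max}^\lambda$. Since $\tau_i(x,\cdot)$ is affine with slope $-\hat c_i(x)$, increasing $\lambda$ can only shift $\argmax_j \tau_j(x,\cdot)$ toward cheaper models for each $x$; consequently $C_\textsc{min}$ and $C_\textsc{max}$ are both non-increasing, satisfy $C_\textsc{min}(\lambda) \leq C_\textsc{max}(\lambda)$ (equivalently, $V$ is convex in $\lambda$ and the two functions are its left- and right-derivatives of $B-V$), and their jumps coincide. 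The main technical step is to verify that the union $\bigcup_{\lambda \geq 0}\bigl[C_\textsc{min}(\lambda),\, C_\textsc{max}(\lambda)\bigr]$ covers the entire interval $\bigl[\lim_{\lambda \to \infty} C_\textsc{min}(\lambda),\; C_\textsc{max}(0)\bigr]$. Granted this, either $B \geq C_\textsc{max}(0)$ and we take $\lambda^\ast = 0$, $\gamma = 0$ (the budget is inactive), or we pick $\lambda^\ast \geq 0$ with $B \in [C_\textsc{min}(\lambda^\ast), C_\textsc{max}(\lambda^\ast)]$ and a $\gamma \in [0,1]$ tuned so that $s^\ast := \gamma\, s_\textsc{min}^{\lambda^\ast} + (1-\gamma)\, s_\textsc{max}^{\lambda^\ast}$ has expected cost exactly $B$.

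\textbf{Optimality and second claim.} By construction $s^\ast$ is feasible and is a maximizer of $L(\cdot, \lambda^\ast)$; since the cost term in $L(s^\ast, \lambda^\ast)$ vanishes, the expected quality of $s^\ast$ equals $V(\lambda^\ast)$, which is $\geq \mathrm{OPT}(B)$ by weak duality and $\leq \mathrm{OPT}(B)$ by feasibility. Hence $s^\ast = s_\textsc{opt}$, proving the first claim. For the second claim, any $\tilde s = \gamma' s_\textsc{min}^{\lambda'} + (1-\gamma') s_\textsc{max}^{\lambda'}$ with expected cost exactly $B$ is likewise a maximizer of $L(\cdot, \lambda')$ with vanishing cost term, so the same chain of inequalities $\mathrm{OPT}(B) \leq V(\lambda') = \text{quality}(\tilde s) \leq \mathrm{OPT}(B)$ forces $\tilde s$ to attain the optimum too. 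The main obstacle will be making the intermediate-value step rigorous when the $x$-distribution has atoms or when $\argmax_j \tau_j(x,\lambda)$ is non-unique on a set of positive probability: there $C_\textsc{min}$ and $C_\textsc{max}$ can differ on a nontrivial range, and the definitions of $s_\textsc{min}^\lambda$ and $s_\textsc{max}^\lambda$ are chosen precisely to bracket these ties so that convex combinations interpolate every intermediate cost.
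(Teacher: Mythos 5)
Your Lagrangian setup and your optimality argument coincide with the paper's: the appendix proof of \cref{thm:routing} ends with exactly your weak-duality/complementary-slackness chain, applied to any tie-respecting strategy of cost exactly $B$, and that chain also delivers the second claim (all cost-$B$ convex combinations of $s_\textsc{min}^{\lambda'}$ and $s_\textsc{max}^{\lambda'}$ are optimal) just as you argue. So the ``Optimality and second claim'' part of your proposal is sound and is essentially the paper's argument.

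The gap is the budget-matching step, which you yourself flag as ``the main technical step'' and ``the main obstacle'' and then take for granted (``Granted this, \dots''). But this is where the first claim of the theorem actually lives: you never prove that $\bigcup_{\lambda\geq 0}[C_\textsc{min}(\lambda),C_\textsc{max}(\lambda)]$ covers the feasible range of budgets, nor that at the bridging $\lambda$ a convex combination with cost exactly $B$ is available; asserting that ``the jumps coincide'' is not a proof and is not even the right statement. The paper closes this hole with a specific chain of lemmas: \cref{lem:routing:1} (any strategy respecting ties at a smaller $\lambda$ costs at least as much as any strategy respecting ties at a larger $\lambda$), an explicit regularity assumption that the set $\Lambda$ of tie values of $\lambda$ is finite, \cref{lem:routing:2} (between consecutive tie points $S_\lambda$ is constant, and at a tie point $\lambda^*$ one has $S_{\lambda^*-\epsilon},S_{\lambda^*+\epsilon}\subseteq S_{\lambda^*}$), and \cref{lem:routing:0} (convexity of $S_{\lambda^*}$), so that mixing a strategy from just below $\lambda^*$ with one from just above stays inside $S_{\lambda^*}$ and its cost interpolates continuously through $B$ (\cref{lem:routing:3}). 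Your sketch would need exactly this inclusion-at-the-jump argument, together with some finiteness or regularity hypothesis on the tie set (without it the $\epsilon$-neighborhood argument breaks down), to become a proof. Two smaller omissions: the paper's precise statement adds an admissibility hypothesis (the budget must at least cover always routing to the cheapest model), which your interval-coverage claim silently presupposes; and your case $B\geq C_\textsc{max}(0)$ should more generally be ``some strategy in $S_0$ is within budget'' (i.e.\ $B\geq C_\textsc{min}(0)$ suffices once one allows a convex combination at $\lambda=0$), which is how the paper phrases it.
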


In \cref{app:routing}, we show how to obtain the optimal $\lambda$ and $\gamma$ for a cost budget $B$ using a validation dataset $D$. In \cref{alg:routing}, we provide pseudocode for the optimal routing algorithm.

\begin{algorithm}
    \caption{Optimal Routing Algorithm}
    \label{alg:routing}
    \textbf{Input}: input query $x$, quality estimator $\hat{q}_i$, cost estimator $\hat{c}_i$, tradeoff parameters $\lambda$ and $\gamma$\\
    \textbf{Output}: Model index $i$ to be used for query $x$\\
    \vspace{-4mm}
    \begin{algorithmic}[1] %
    \STATE $\tau_i(x, \lambda) := \hat{q}_i(x) - \lambda \hat{c}_i(x)$
    \STATE $\tau_{\text{max}}(x, \lambda) := \max_{i \in \{1, \dots, k\}} \tau_i(x, \lambda)$
    \STATE $\textrm{best} := \{i \in \{1, \dots, k\} | \tau_i(x, \lambda) = \tau_{\text{max}}(x, \lambda)\}$
    \STATE $\textrm{min\_cost\_index} := \arg\min_{i \in \textrm{best}} c_i(x)$
    \STATE $\textrm{max\_cost\_index} := \arg\max_{i \in \textrm{best}} c_i(x)$
    \IF {$\textrm{random(0, 1)} < \gamma$}
    \STATE \textbf{return} $\textrm{min\_cost\_index}$
    \ELSE
    \STATE \textbf{return} $\textrm{max\_cost\_index}$
    \ENDIF
    \end{algorithmic}
\end{algorithm}

Since $\gamma$ is often not equal to $0$ or $1$, $s_{\textsc{opt}}$ is not necessarily deterministic, i.e., there are queries $x$ such that $s_{\textsc{opt}, i}(x) \notin \{0, 1\}$ for some index $i$. Therefore, prior work that only considered deterministic routing strategies \citep{routing_proofchen2022,routerbench} cannot express the optimal routing strategy and fall back to the near-optimal $s_\textsc{min}^\lambda$.

\paragraph{Example} To illustrate routing, consider a scenario with two models $m_1$ and $m_2$. The estimated cost $\hat{c}(x) = (0.9, 1)$ is constant across queries and slightly lower for $m_1$ than for $m_2$. The quality is estimated based on the category of the query, which is either math, code, or generic. For instance, let $\hat{q}(x_\textrm{math}) = (0.8, 0.5)$, $\hat{q}(x_\textrm{code}) = (0.5, 0.8)$, and $\hat{q}(x_\textrm{generic}) = (0.8, 0.9)$. For $\lambda = 1$ and $\gamma = 0.7$, the cost-quality tradeoff is highest for $m_1$, resp. $m_2$, on math, resp. code, queries, and is equal for both models on generic queries. Thus, the router will select $m_1$, resp. $m_2$, for math, resp. code, queries, and select $m_1$ with a probability of $0.7$ and $m_2$ with a probability of $0.3$ for generic queries.

\paragraph{Ex-Ante Quality Estimation} We explicitly distinguish between the model's true quality $q_i(x)$ and the ex-ante quality estimate $\hat{q}_i(x)$. An optimal routing strategy will select a good model only if $q_i(x) \approx \hat{q}_i(x)$, otherwise the objective in \cref{eq:routing} is not appropriate. Thus, even when routing is suited for the application, the strategy will fail if the quality estimates are inaccurate. This makes quality estimation a critical component of routing strategies. While cost estimation also faces similar challenges, we found that it is less critical and can be approximated more easily.
\section{Cascading as Sequential Routing}\label{sec:cascading}

We extend our analysis of the optimal routing strategy to a cascade, providing proofs for all statements in \cref{app:cascading}. 

\paragraph{Brief Overview} In this section, we reinterpret cascading as a sequence of routing problems. Furthermore, we show how our approach improves upon the cascading strategies used in prior work. Finally, we examine the impact of post-hoc quality estimates on the effectiveness of cascading strategies.

\paragraph{Cascading} In cascading, an input query is processed sequentially through a chain of models, typically arranged in order of increasing size or cost. The cascade stops once a model's output meets a certain condition, and that output is returned. We will reinterpret cascading as a sequence of routing problems. To do so, we first define the models over which we need to route, which we refer to as \textit{supermodels}.

\begin{definition}[Supermodel]\label{def:supermodel}
A supermodel $M$ is a sequence of models $(m_{i_1}, \dots, m_{i_j})$ such that running a query through $M$ is equivalent to running it through each of the models in the sequence. $\mathcal{M}$ denotes the set of all supermodels and by $M_{i:j}$ we denote the supermodel $(m_i, \dots, m_j)$.
\end{definition}

In cascading, we only need to consider the supermodels $M_{1:1}, \dots, M_{1:k}
$. The full expressivity of \cref{def:supermodel} will only be necessary for cascade routing in \cref{sec:cascade_router}.

\paragraph{Cascading as Sequential Routing} Running a cascade on a sample $x$ occurs in a sequence of steps, where at each step, the cascade determines whether to run the next model in the sequence or terminate. By step $j$, we have obtained outputs from the first $j-1$ models. To decide whether to continue and run $m_j$, we need to determine, in expectation, how well the supermodels $M_{1:j-1}, \dots M_{1:k}$ will perform on the sample $x$. Once again, this performance is measured as having the highest expected output quality within a certain cost budget. If $M_{1:j-1}$ offers the best performance, we terminate the cascade and return its output, i.e., the output of $m_{j-1}$. Otherwise, if any of $M_{1:j}, \dots, M_{1:k}$ has better performance, we continue the cascade and run $m_j$. Therefore, at step $j$, the cascade is equivalent to a routing strategy that selects the best supermodel from $M_{1:j-1}, \dots, M_{1:k}$. Thus, a cascade can be formally defined as follows:

\begin{definition}[Cascading Strategy]\label{def:cascading}
    A cascading strategy $s$ is a sequence of routing strategies $(s^{(1)}, \dots, s^{(k)})$ such that $s^{(j)}$ routes between the supermodels $M_{1:j-1}, \dots, M_{1:k}$.
\end{definition}

Notably, while the action associated with supermodels $M_{1:j}, \dots, M_{1:k}$ is the same, namely continuing the cascade, it is important to consider all these supermodels. Indeed, a model $m_j$ might perform poorly while $m_{j+1}$ performs exceptionally well on a given query. In such cases, the quality-cost tradeoff of $M_{1:j}$ will be worse than the tradeoff of $M_{1:j-1}$, but $M_{1:j+1}$ could still provide a better outcome. Therefore, it is crucial to consider all supermodels $M_{1:j}, \dots, M_{1:k}$ at step $j$ rather than making decisions solely based on immediate performance.

\paragraph{Quality and Cost} To apply \cref{thm:routing:main} to find the optimal cascading strategy, we first need to derive the quality and cost estimates of the supermodels. Both of these can depend on the answers of previously computed models. Therefore, let $\hat{q}^{(j)}(x)$ and $\hat{c}^{(j)}(x)$ represent the updated estimates in step $j$ after computing the first $j-1$ models.

We derive the quality and cost estimates associated with supermodel $M_{1:i}$, denoted as $\hat{q}^{(j)}_{1:i}(x)$ and $\hat{c}^{(j)}_{1:i}(x)$, based on the quality and cost estimates of the individual models. Trivially, the cost of the supermodel is equal to the sum of the individual model costs. The quality of a supermodel, however, is governed by the best model within it.  Thus, it equals $\mathbb{E}_{\hat{q}_i}[\max(\hat{q}_1(x), \dots, \hat{q}_i(x))]$, where the expected value reflects the uncertainty in each quality estimate. Specifically, each quality estimate $\hat{q}_i(x)$ is modeled as a random variable estimating the true quality $q_i(x)$. This is crucial since ignoring uncertainty would falsely assume that the quality of a supermodel is always equal to the best model within it, even though the best model may return a poor answer, while another returns a good one. To estimate the uncertainties associated with the estimates, we compute the variance of $\hat{q}^{(j)}_i(x) - \hat{q}^{(k)}_i(x)$ over a validation dataset.

\vspace{-1mm}
\paragraph{Optimal Cascading} We now leverage the optimal routing strategy from \cref{thm:routing:main} to determine the optimal cascading strategy. As before, optimality is defined in terms of maximizing the expected output quality while adhering to a given cost budget. However, the budget is now only enforced over the entire cascade, and not over individual steps. This leads to a slightly different formulation:

\begin{theorem}[Optimal Cascading Strategy]\label{thm:cascading:main}
 For a given cost budget $B$, there exist $\lambda_1, \dots, \lambda_k \in \mathbb{R}^+$ and a $\gamma \in [0,1]$ such that the optimal cascading strategy $s_\textsc{opt} = (s^{(1)}_\textsc{opt}, \dots, s^{(k)}_\textsc{opt})$ is given by $s^{(j)}_\textsc{opt} = \gamma s^{(j), \lambda_j}_\textsc{min} + (1 - \gamma) s^{(j), \lambda_j}_\textsc{max}$ where $s^{(j), \lambda_j}_\textsc{min}$ and $s^{(j), \lambda_j}_\textsc{max}$ are defined as in \cref{thm:routing:main}.
\end{theorem}

In \cref{app:cascading}, we explain how to obtain the hyperparameters $\lambda_1, \dots, \lambda_k$ and $\gamma$ for a given cost budget $B$ using a validation dataset $D$. In \cref{alg:cascading}, we provide pseudocode for the optimal cascading algorithm, illustrating the steps involved in selecting the best model for a given query.

\begin{algorithm}
    \caption{Optimal Cascading Algorithm}
    \label{alg:cascading}
    \textbf{Input}: input query $x$, current step $j$, quality estimator $\hat{q}_i^{(j)}$, cost estimator $\hat{c}_i^{(j)}$, tradeoff parameters $\lambda_j$ and $\gamma$\\
    \textbf{Output}: Whether to stop the cascade or continue with the next model $m_j$\\
    \vspace{-4mm}
    \begin{algorithmic}[1] %
    \FOR {$i = j - 1$ \textbf{to} $k$}
        \STATE $\hat{q}^{(j)}_{1:i}(x) := \mathbb{E}_{\hat{q}^{(j)}}[\max(\hat{q}_1^{(j)}(x), \dots, \hat{q}_i^{(j)}(x))]$
        \STATE $\hat{c}^{(j)}_{1:i}(x) := \sum_{l=1}^i \hat{c}_l^{(j)}(x)$
    \ENDFOR
    \STATE $\textrm{index} := \textrm{Router}\Big(x, (\hat{q}^{(j)}_{1:j-1}(x), \dots, \hat{q}^{(j)}_{1:k}(x)), $
    \STATE \phantom{hellohellohellohe}$(\hat{c}^{(j)}_{1:j-1}(x), \dots, \hat{c}^{(j)}_{1:k}(x)), \lambda_j, \gamma\Big)$
    \IF {$\textrm{index} == 1$}
    \STATE \textbf{return} $\textrm{stop}$
    \ELSE
    \STATE \textbf{return} $\textrm{run } m_j$
    \ENDIF
    \end{algorithmic}
\end{algorithm}

\paragraph{Example} To illustrate the optimal cascading strategy, consider once again a scenario with two models $m_1, m_2$ with costs $\hat{c}^{(1)}(x) = (0.5, 1)$ and $\hat{q}^{(1)}(x) = (0.5, 0.8)$ (ignoring uncertainty). In a cascade, we will always run $m_1$ for a query $x$. Thus, we obtain the first model's output. Based on this output, we can adjust the quality and cost estimates. Suppose, for instance, that the model output is very long and its confidence in its own answer is only $30\%$. Then we update, for instance, $\hat{q}^{(2)}(x) = (0.3, 0.6)$ and $\hat{c}^{(2)}(x) = (1, 2)$. For $\lambda_2=1$, we would now stop the cascade and return the output of $m_1$ since the cost-quality tradeoff is highest for the supermodel $\{m_1\}$. If, instead, $\lambda_2=0.1$, we would run $m_2$ since the cost-quality tradeoff is highest for the supermodel $\{m_1, m_2\}$. In this case, the cascade would return the output of $m_2$.

\vspace{-1mm}
\paragraph{Prior Work} Prior work on cascading has often relied on strong assumptions to simplify the strategy, using a treshold-strategy as an approximation of the optimal cascade. Specifically, in step $j$, the cascade continues if $\hat{q}^{(j)}_{j-1}(x) < \tau_j$ for some threshold $\tau_j \in \mathbb{R}$. To the best of our knowledge, all existing works can be seen as a specific instantiation of this thresholding scheme with cost and quality estimators that depend on the specific application used \citep{frugalgpt,reviewers3,quality_cascadegupta2024,reviewers2,reviewers1}. Below, we outline the conditions under which this simplified approach is optimal.

\begin{corollary}[Optimal Threshold Strategy]\label{cor:cascading:threshold}
    Under minor technical assumptions, the thresholding strategy is equivalent to our cascading strategy if and only if the following conditions hold: $\hat{c}^{(j)}_i(x)$ is independent of $x$ for all $i, j \in \{1, \dots, k\}$, $\hat{q}^{(j)}_{i}(x)$ is independent of $x$ for all $i \geqslant j$, and $\hat{q}^{(j)}_{1:i}(x)$ is equal to $\hat{q}^{(j)}_{i}(x)$.
\end{corollary}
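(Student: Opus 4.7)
The plan is to prove both directions of the biconditional by working directly with the tradeoff expression $\tau_i^{(j)}(x, \lambda) = \hat{q}^{(j)}_{1:i}(x) - \lambda \hat{c}^{(j)}_{1:i}(x)$ that governs the optimal cascading strategy from \cref{thm:cascading:main}. By that theorem, at step $j$ the optimal strategy continues the cascade if and only if some supermodel $M_{1:i}$ with $i \geq j$ strictly beats $M_{1:j-1}$ in this tradeoff, i.e.,
$$\max_{i \geq j} \bigl(\hat{q}^{(j)}_{1:i}(x) - \hat{q}^{(j)}_{1:j-1}(x) - \lambda(\hat{c}^{(j)}_{1:i}(x) - \hat{c}^{(j)}_{1:j-1}(x))\bigr) > 0.$$
All of my reasoning will be a substitution into, and manipulation of, this single criterion, mirroring the randomized tie-breaking via $\gamma$ already handled in \cref{thm:routing:main}.

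For the forward (\emph{if}) direction, I would substitute the three hypotheses. Condition~1 makes $\hat{c}^{(j)}_{1:i}(x)$ a constant $C_{j,i}$ for every $i$; conditions~2 and~3 together give $\hat{q}^{(j)}_{1:i}(x) = \hat{q}^{(j)}_i(x) = Q_{j,i}$, a constant for $i \geq j$, while condition~3 alone yields $\hat{q}^{(j)}_{1:j-1}(x) = \hat{q}^{(j)}_{j-1}(x)$. The continuation criterion then collapses to the scalar inequality $\hat{q}^{(j)}_{j-1}(x) < \tau_j$, with $\tau_j := \max_{i \geq j}\bigl(Q_{j,i} - \lambda(C_{j,i} - C_{j,j-1})\bigr)$, which is exactly the thresholding rule. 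The same $\tau_j$ also works for the randomized component of the optimal strategy because $\gamma$ only affects tie-breaking on the boundary $\hat{q}^{(j)}_{j-1}(x) = \tau_j$, a measure-zero event under mild assumptions.

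For the converse (\emph{only if}) direction, I would argue by contrapositive: assuming equivalence with some thresholding rule, I show each of the three conditions must hold. The argument is a separation argument — if, say, $\hat{c}^{(j)}_i(x)$ truly depends on $x$ for some $i$, or $\hat{q}^{(j)}_i(x)$ depends on $x$ for some $i \geq j$, or $\hat{q}^{(j)}_{1:i}(x) \neq \hat{q}^{(j)}_i(x)$, then one can exhibit two queries $x_1, x_2$ sharing the same value of $\hat{q}^{(j)}_{j-1}$ on which the argmax in the displayed inequality flips sign. Since a threshold rule must agree on $x_1$ and $x_2$ while the optimal strategy does not, equivalence fails. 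Each of the three conditions will need its own small construction of such a pair, but in every case the construction is a one-coordinate perturbation of the offending estimate.

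The main obstacle will be formulating the \emph{minor technical assumptions} precisely enough to support the separation arguments above. Concretely, one needs a non-degeneracy hypothesis on the joint law of $(\hat{q}^{(j)}_\ell(x), \hat{c}^{(j)}_\ell(x))_{\ell \geq j-1}$ under $x \sim \mathcal{X}$ so that the conditional distribution given $\hat{q}^{(j)}_{j-1}(x)$ has nontrivial support on the relevant level sets; otherwise pathological cancellations inside the max could keep the continuation decision a function of $\hat{q}^{(j)}_{j-1}(x)$ alone even when the three structural conditions fail. Verifying that this hypothesis is genuinely ``minor'' — satisfied by any reasonable estimator whose components are not deterministically coupled to $\hat{q}^{(j)}_{j-1}$ — is the subtle step, and I would state it explicitly up front so that the level-set perturbations in the converse direction are well-defined.
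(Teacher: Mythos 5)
Your forward direction is essentially the paper's: substituting the three conditions into the continuation test $\hat{q}^{(j)}_{1:j-1}(x)-\lambda_j\hat{c}^{(j)}_{1:j-1}(x)<\max_{i\geq j}\bigl(\hat{q}^{(j)}_{1:i}(x)-\lambda_j\hat{c}^{(j)}_{1:i}(x)\bigr)$ collapses it to a threshold on $\hat{q}^{(j)}_{j-1}(x)$. Two remarks even here: the paper compares the \emph{family} of deterministic cascading strategies (all $\lambda$'s, with $\gamma=0$) to the \emph{family} of thresholding strategies (all $\tau$'s), so your discussion of $\gamma$-tie-breaking is beside the point, and the equivalence also requires the reverse matching — for an arbitrary threshold $\tau_j$ one must solve for a $\lambda_j$ reproducing it, which is why the paper's appendix version additionally demands the constant costs be strictly positive; your plan only maps cascades to thresholds, not thresholds to cascades.

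The genuine gap is in your converse. The paper never proves that the three conditions must hold for the \emph{original} estimates; its precise statement (\cref{cor:cascading:threshold:app}) only asserts the existence of \emph{alternative} estimates $\hat{q}^{(j)\prime},\hat{c}^{(j)\prime}$ satisfying the conditions whose induced family of cascading strategies is equivalent to the original one, and the ``only if'' direction is then settled by an explicit, almost trivial construction ($\hat{c}^{(j)\prime}_i\equiv 1$, $\hat{q}^{(j)\prime}_i\equiv 1$ for $i\geq j$, $\hat{q}^{(j)\prime}_{1:i}=\hat{q}^{(j)\prime}_i$) plus transitivity of the equivalence relation of \cref{def:equivalence}. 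The ``minor technical assumptions'' in the main text refer to exactly this reformulation, not to distributional non-degeneracy. Your plan instead tries to prove the literal converse by exhibiting query pairs $x_1,x_2$ with equal $\hat{q}^{(j)}_{j-1}$ but opposite continuation decisions; but the estimators and the query distribution $\mathcal{X}$ are fixed, so you cannot ``perturb the offending estimate'' — you can only hope such witnesses already exist, which is precisely the non-degeneracy hypothesis you admit you have not formulated. Without it the literal claim is false (your own ``pathological cancellations'' example, e.g.\ estimates for $i\geq j$ that depend on $x$ only through $\hat{q}^{(j)}_{j-1}(x)$), so as written the only-if direction is not established, and it is in any case a different and substantially harder statement than the one the paper proves.
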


\paragraph{Post-Hoc Quality Estimation} Once again, this theoretical framework highlights the importance of quality estimation, with a shift in focus from ex-ante quality estimation to post-hoc quality estimation, which now plays a critical role. Cascading approaches are only advantageous when the post-hoc quality estimate provides significantly better information than the ex-ante estimate. If this improvement is minimal, it would be more effective to directly route queries to the most suitable model, bypassing the cascading process. While the post-hoc estimate is essential for refining decisions, the ex-ante quality estimate remains valuable in determining whether future models can potentially deliver better performance. Only in the threshold cascade strategy, where the ex-ante estimate is fixed, does it become irrelevant. By contrast, our approach improves upon the threshold cascade by incorporating both ex-ante and post-hoc quality estimates, thereby enabling more informed decision-making.

\section{\hspace{-1mm}Cascade Routing as Cascade Generalization}\label{sec:cascade_router}
Both routing and cascading are powerful techniques that enable the efficient use of multiple models. However, their use is often orthogonal: while routing is useful when ex-ante quality estimates are accurate, cascading is more beneficial when post-hoc estimates are accurate. We therefore present cascade routing, which is a generalization of both techniques. Proofs for all theorems and lemmas in this section are included in \cref{app:cascaderouting}.

\paragraph{Brief Overview} In this section, we first define cascade routing and explain how it generalizes both routing and cascading. We then derive the optimal cascade routing strategy and solve several of the additional challenges that arise when applying cascade routing in practice. Finally, we provide an illustrative example.

\paragraph{Cascade Routing} Cascade routing closely resembles cascading, but with one crucial difference: the routing strategy at step $j$ routes between all possible supermodels, not just the supermodels $M_{1:j-1}, \dots, M_{1:k}$. Therefore, both \cref{def:cascading} and \cref{thm:cascading:main} can be extended to this setting.

\begin{definition}[Cascade Routing]
    A cascade routing strategy $s$ is a sequence of routing strategies $(s^{(1)}, \dots, s^{(k)})$ such that, for a given sample $x \in \mathcal{X}$, $s^{(j)}$ routes between all supermodels in $\mathcal{M}$ that start with the $j-1$ models that have already been computed for this query.
\end{definition}

\begin{theorem}[Optimal Cascade Routing]\label{thm:cascade_router:main}
 For a given cost budget $B$, there exist $\lambda_1, \dots, \lambda_k \in \mathbb{R}^+$ and a $\gamma \in \mathbb{R}^+$ such that the optimal cascade routing strategy $s_\textsc{opt} = (s^{(1)}_\textsc{opt}, \dots, s^{(k)}_\textsc{opt})$ is given by $s^{(j)}_\textsc{opt} = \gamma s^{(j), \lambda_j}_\textsc{min} + (1 - \gamma) s^{(j), \lambda_j}_\textsc{max}$ where $s^{(j), \lambda_j}_\textsc{min}$ and $s^{(j), \lambda_j}_\textsc{max}$ are defined as in \cref{thm:routing:main}.
\end{theorem}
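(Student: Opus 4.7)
The plan is to mimic the proof of \cref{thm:cascading:main}, adapted to the enlarged supermodel space that cascade routing considers at each step. The only structural difference between cascading and cascade routing is that at step $j$, the router chooses among all supermodels in $\mathcal{M}$ that begin with the $j-1$ models already computed for the given query, rather than the restricted collection $\{M_{1:j-1}, \dots, M_{1:k}\}$; call this enlarged set $\mathcal{M}^{(j)}(x)$. Because the per-step routing subproblem arising in the proof of \cref{thm:cascading:main} is handled generically via \cref{thm:routing:main}, enlarging the set of supermodels being routed over should not break the argument, only its bookkeeping.

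Concretely, I would first dualize the global cost budget by introducing a single Lagrange multiplier $\lambda \in \mathbb{R}^+$, reducing the problem to maximizing the unconstrained objective $\mathbb{E}[\hat{q}(x; M) - \lambda \hat{c}(x; M)]$ where $M$ is the supermodel selected by the cascade routing policy. Then, using the tower property of conditional expectation, I would decompose this into a sequence of conditional routing subproblems: at step $j$, conditional on the trajectory of previously computed models and their outputs, the step-$j$ router faces a pointwise routing problem over $\mathcal{M}^{(j)}(x)$ with supermodel quality $\hat{q}^{(j)}_M(x)$ and supermodel cost $\hat{c}^{(j)}_M(x)$. Applying \cref{thm:routing:main} to each conditional subproblem, the optimal step-$j$ policy takes the form of a convex combination of $s^{(j), \lambda_j}_\textsc{min}$ and $s^{(j), \lambda_j}_\textsc{max}$ for a suitable $\lambda_j$ that depends on $\lambda$ and on the effective remaining budget at step $j$.

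The main obstacle is twofold. First, showing that a single randomization weight $\gamma$ suffices across all $k$ steps, rather than step-dependent weights $\gamma_j$, requires invoking the second part of \cref{thm:routing:main}: since every strategy with a prescribed expected cost attains the same optimal quality, one has enough flexibility to consolidate all per-step randomization into a single global weight chosen to match the budget $B$. Second, because the estimators $\hat{q}^{(j)}$ and $\hat{c}^{(j)}$ are random variables that depend on the outputs of the previously computed models, one must verify that the Lagrangian decomposition respects the filtration of the cascade: conditional on the step-$j$ history, the remaining budget effectively collapses into the local multiplier $\lambda_j$, which is what licenses the conditional application of \cref{thm:routing:main} at each step and yields the stated form of $s^{(j)}_\textsc{opt}$.
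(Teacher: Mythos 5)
Your high-level skeleton does match the paper's: the paper proves this theorem by noting that the argument for \cref{thm:cascading:main} carries over once the expectation in the per-step problem (\cref{eq:cascading:inner}) runs not only over $x$ but also over the supermodels already computed by step $j-1$; each step is then a plain routing problem over the admissible supermodels (with the probability of reaching that step folded into the estimates and divided out), so \cref{thm:routing:main} yields the form $\gamma s^{(j),\lambda_j}_\textsc{min} + (1-\gamma) s^{(j),\lambda_j}_\textsc{max}$, and a single $\gamma$ suffices simply because the budget is enforced only over the whole cascade, so no per-step budget has to be met exactly. Your conditional, per-step application of \cref{thm:routing:main} over the enlarged supermodel set is exactly that part of the argument, and your filtration concern is handled the same way the paper handles it.

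The gap is in how you couple the steps. You propose dualizing the global budget with a single multiplier $\lambda$ and letting each step inherit a $\lambda_j$ that ``depends on $\lambda$ and the effective remaining budget.'' First, that dualization needs strong duality for the joint problem over $(s^{(1)},\dots,s^{(k)})$, which is not a linear program in the joint strategy: the cost and quality contributed at step $j$ are weighted by the probability of reaching step $j$, a product of earlier continuation probabilities, and the estimates $\hat{q}^{(j)},\hat{c}^{(j)}$ change with the realized history. The paper never dualizes the global constraint, precisely to avoid this. Second, your framing is internally inconsistent: once you have replaced the budget by a single Lagrange term, there is no ``remaining budget'' left to modulate anything, and the decomposition would force $\lambda_j = \lambda$ for every $j$ --- a strictly stronger claim than the theorem, and one at odds with the paper's outer problem (\cref{eq:cascading}, \cref{eq:cascading:outer}), where distinct $\lambda_j$ must be searched numerically because no analytic coupling is known. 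The paper's route instead posits per-step budgets $B_j$ realized by the optimal strategy, applies the routing theorem to each inner problem at those budgets, and relegates the choice of $B_j$ (equivalently of $\lambda_1,\dots,\lambda_k,\gamma$) to an outer, non-analytic optimization; your invocation of the second part of \cref{thm:routing:main} to consolidate the randomization weights is likewise unnecessary. (The actual new content of the paper's appendix for cascade routing, \cref{lem:neg_marginal_gain:main} on pruning via submodularity, concerns efficiency and is not needed for the optimality statement, so omitting it is fine.)
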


In \cref{alg:cascade_routing}, we provide pseudocode for the cascade routing algorithm. While cascade routing is a seemingly simple extension of cascading, it also introduces additional challenges which we address now.

\begin{algorithm}
    \caption{Optimal Cascade Routing Algorithm}
    \label{alg:cascade_routing}
    \textbf{Input}: input query $x$, model indices run so far $\{i_1, ..., i_{j-1}\}$, quality estimator $\hat{q}_i^{(j)}$, cost estimator $\hat{c}_i^{(j)}$, tradeoff parameters $\lambda_j$ and $\gamma$\\
    \textbf{Output}: Index of the next model to run\\
    \vspace{-4mm}
    \begin{algorithmic}[1] %
    \STATE $\mathcal{S} = [M \subset \{1, \dots k\} | \forall l \in \{1, \dots, j-1\}: i_l \in M]$
    \FOR {$M \in \mathcal{S}$}
        \STATE $\hat{q}^{(j)}_M(x) := \mathbb{E}_{\hat{q}^{(j)}}[\max_{l \in M}(\hat{q}_l^{(j)}(x))]$
        \STATE $\hat{c}^{(j)}_M(x) := \sum_{l \in M} \hat{c}_l^{(j)}(x)$
    \ENDFOR
    \STATE $\textrm{index} := \textrm{Router}\Big(x, (\hat{q}^{(j)}_M(x) \text{ for } M \in \mathcal{S}), $
    \STATE \phantom{hellohellohellohe}$(\hat{c}^{(j)}_M(x) \text{ for } M \in \mathcal{S}), \lambda_j, \gamma\Big)$
    \STATE $\textrm{possibilities} := \mathcal{S}_{\textrm{index}} \setminus \{i_1, ..., i_{j-1}\}$
    \IF {$\textrm{possibilities} = \emptyset$}
        \STATE \textbf{return} $\textrm{stop}$
    \ELSE
    \STATE $\textrm{min\_cost\_index} := \arg\min_{i \in \textrm{possibilities}} c_i(x)$
    \STATE \textbf{return} $\textrm{min\_cost\_index}$
    \ENDIF
    \end{algorithmic}
\end{algorithm}

\paragraph{Model Order} In cascading, the model order is predetermined, and the routing strategy only decides whether to proceed with the next model in the sequence. In contrast, cascade routing must dynamically determine the order in which models are computed. Despite this, both the estimated quality $\hat{q}^{(j)}_{M}(x)$ and cost $\hat{c}^{(j)}_{M}(x)$ of a supermodel $M$ are order-independent. Therefore, supermodels that contain the same models in a different order will have the same cost and quality. To mitigate this, we sort the models within the selected supermodel by cost and compute the cheapest one first (illustrated in Lines 12-13 of \cref{alg:cascade_routing}). This approach aligns with cascading, where more expensive models are only used if cheaper models do not suffice.

\vspace{-1mm}
\paragraph{Number of Supermodels} In cascading, the quality and cost must be computed for a maximum of $k$ supermodels at each step. However, in cascade routing, the number of supermodels grows exponentially, leading to the need to evaluate up to $2^k$ supermodels. This increase can become prohibitively costly, particularly since the model selection process must remain computationally negligible with respect to model computation. To mitigate this, we leverage so-called negative marginal gains. It can be shown (see \cref{app:cascaderouting}) that if a model $m$ in a supermodel $M$ negatively impacts the quality-cost tradeoff, all supermodels containing all models in $M$ can be pruned from the search space. For example, if $m_1$ negatively affects the quality-cost tradeoff of the supermodel $\{m_1, m_2\}$, we can prune all supermodels that contain \emph{both} $m_1$ and $m_2$. Since this negative contribution is quite common, this allows us to prune the search space significantly. More formally, this pruning operation relies on the following lemma:

\begin{lemma}[Negative Marginal Gain]\label{lem:neg_marginal_gain:main}
    Let $M \in \mathcal{M}$ and $m$ be any model in $M$. Let the marginal gain of $m$ w.r.t. $M$ be defined as $\tau_{M}(x, \lambda) - \tau_{M \setminus \{m\}}(x, \lambda)$. Then, if the marginal gain of $m$ w.r.t. $M$ is strictly negative for a given query, the optimal cascade routing strategy will never run a supermodel $M' \in \mathcal{M}$ that contains all models in $M$.
\end{lemma}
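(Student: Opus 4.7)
The plan is to reduce the claim to a submodularity-type argument on the tradeoff $\tau$. Writing $\tau_S(x,\lambda) = \hat{q}_S(x) - \lambda \hat{c}_S(x)$ and recalling from the paragraph on quality and cost that $\hat{q}_S(x) = \mathbb{E}[\max_{i \in S} \hat{q}_i(x)]$ while $\hat{c}_S(x) = \sum_{i \in S} \hat{c}_i(x)$, I first observe that the cost is modular in $S$ and the quality is submodular in $S$. The cost claim is immediate. For the quality, fix any realization of the random quality estimates and note that for any $A \subseteq B$ with $m \notin B$,
$$\max_{i \in A \cup \{m\}} \hat{q}_i - \max_{i \in A} \hat{q}_i = \max\bigl(\hat{q}_m - \max_{i \in A}\hat{q}_i,\, 0\bigr) \geq \max\bigl(\hat{q}_m - \max_{i \in B}\hat{q}_i,\, 0\bigr),$$
since $\max_{i \in A}\hat{q}_i \leq \max_{i \in B}\hat{q}_i$. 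Taking expectations preserves the inequality, yielding submodularity of $\hat{q}_S(x)$.

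Second, with submodularity in hand, I apply the inequality with $A = M \setminus \{m\}$ and $B = M' \setminus \{m\}$. Since $M \subseteq M'$ by the containment hypothesis of the lemma, we have $A \subseteq B$, and $m \notin A, m \notin B$ by construction. The quality marginal of $m$ w.r.t. $M'$ is therefore upper-bounded by that w.r.t. $M$, while the cost marginal $\hat{c}_m(x)$ is identical in both. Subtracting $\lambda \hat{c}_m(x)$ from both sides of the submodularity inequality gives
$$\tau_{M'}(x,\lambda) - \tau_{M' \setminus \{m\}}(x,\lambda) \;\leq\; \tau_M(x,\lambda) - \tau_{M \setminus \{m\}}(x,\lambda) \;<\; 0,$$
where the final strict inequality is the negative marginal gain hypothesis. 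Hence $M' \setminus \{m\}$ strictly dominates $M'$ in the $\tau$-value for the given query.

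Third, I invoke \cref{thm:cascade_router:main}: at each decision step, the optimal cascade routing strategy is a convex combination of $s^{(j),\lambda_j}_\textsc{min}$ and $s^{(j),\lambda_j}_\textsc{max}$, both of which, by construction in \cref{thm:routing:main}, assign positive probability only to supermodels achieving the maximum of $\tau$ over the routing set. Because $M'$ is strictly dominated by $M' \setminus \{m\}$ whenever $M'$ appears in the routing set (and $M' \setminus \{m\}$ also appears there, being a subset of an available supermodel), $M'$ is never in the $\argmax$ and therefore never selected with positive probability.

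The main obstacle is bookkeeping rather than a deep mathematical issue: supermodels are formally sequences, whereas the hypothesis "contains all models in $M$" and the set operation $M' \setminus \{m\}$ are set-theoretic. This is resolved by the observation in the paragraph on Model Order that both $\hat{q}_M(x)$ and $\hat{c}_M(x)$ depend only on the underlying multiset of models, so one may identify each supermodel with the corresponding set throughout the argument; the routing strategy's sorting of models by cost takes care of the realized execution order.
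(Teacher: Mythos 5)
Your proposal is correct and follows essentially the same route as the paper: it proves submodularity of $S \mapsto \mathbb{E}[\max_{i\in S}\hat{q}_i(x)]$ (the paper's Lemma~\ref{lem:submodular}), combines it with modularity of cost to show $\tau_{M'}(x,\lambda) - \tau_{M'\setminus\{m\}}(x,\lambda) \leq \tau_M(x,\lambda) - \tau_{M\setminus\{m\}}(x,\lambda) < 0$, and concludes that the optimal strategy never selects $M'$ since it is strictly dominated. Your extra remarks on the set-versus-sequence bookkeeping and on invoking Theorem~\ref{thm:cascade_router:main} explicitly are fine and only make explicit what the paper leaves implicit.
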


\vspace{-1mm}
\paragraph{Example} To illustrate the optimal cascade routing strategy, consider a scenario with two models $m_1$ and $m_2$ with costs $\hat{c}^{(1)}(x) = (0.5, 1)$ and $\hat{q}^{(1)}(x) = (0.5, 0.8)$. In contrast to cascading, we do not necessarily need to run $m_1$ first. In this scenario, if $\lambda_1 = 0.1$, we would immediately run $m_2$. While we would most likely stop after $m_2$, it is possible that its output is so bad that we update the quality estimator to $\hat{q}^{(2)}(x) = (0.5, 0.1)$. In this case, for $\lambda_2 = 0.1$, we would run $m_1$ next and return its output. If $\lambda_1 = 1$, we come into the more classical cascading scenario explained before where $m_1$ is run first.
\begin{table*}
    \centering
    \footnotesize
    \caption{AUC scores in $\%$ for different strategies on RouterBench across model and noise levels. All baselines are always worse than the $95\%$ confidence intervals of cascade routing. For a discussion on confidence intervals, we refer to \cref{app:appendix-confidence}.}
    \begin{tabular}{
        l
        x{2}{2}
        x{2}{2}
        x{2}{2}
        x{2}{2}
        x{2}{2}
        x{2}{2}
        x{2}{2}
        x{2}{2}
        x{2}{2}
    }
    \toprule
    & \multicolumn{3}{c}{Three Models} & \multicolumn{3}{c}{Five Models} & \multicolumn{3}{c}{Eleven Models} \\
    \cmidrule(lr){2-4} \cmidrule(lr){5-7} \cmidrule(lr){8-10}
    & {Low} & {Med} & {High} & {Low} & {Med} & {High} & {Low} & {Med} & {High} \\
    \midrule
    Linear Interp. & 69.619211 & 69.619211 & 69.619211 & 69.223137 & 69.223137 & 69.223137 & 70.506406 & 70.506406 & 70.506406 \\
    Routing & 79.726787 & 74.965568 & 71.812637 & 81.235991 & 74.432665 & 71.328369 & 83.245409 & 74.632066 & 72.670410 \\
    Cascade (Baseline) & 80.863873 & 74.641304 & 72.475432 & 82.332691 & 73.029091 & 69.526597 & 84.480300 & 73.643526 & 69.788751 \\
    \midrule
    Cascade (Ours) & 81.087637 & 76.163850 & 72.668429 & 83.057026 & 75.169221 & 70.179556 & 84.468958 & 75.095880 & 70.255078 \\
    Cascade Routing (Ours) & \bfseries 82.364416 & \bfseries 76.553922 & \bfseries 73.216015 & \bfseries 84.329598 & \bfseries 76.313683 & \bfseries 72.745103 & \bfseries 87.243456 & \bfseries 77.568917 & \bfseries 74.404521 \\
    \bottomrule
    \end{tabular}
    \label{tab:routerbench}
\end{table*}

\begin{figure*}[t]
    \centering
    \vspace{-3mm}
    \begin{subfigure}[t]{0.49\textwidth}
        \centering
        \includegraphics[width=0.9\textwidth]{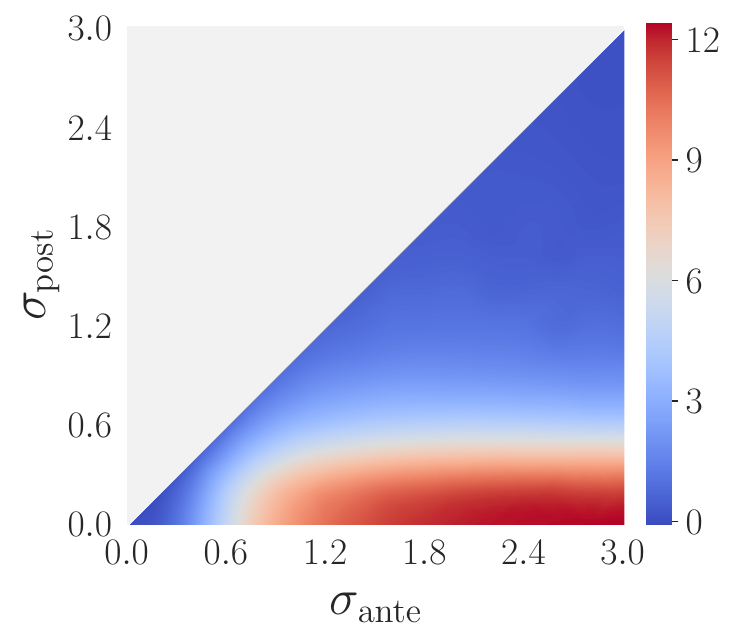}
        \vspace{-2mm}
        \caption{Comparison of cascade routing with routing.}
        \label{fig:routing_diff}
    \end{subfigure}
    \hfill
    \begin{subfigure}[t]{0.48\textwidth}
        \centering
        \includegraphics[width=0.9\textwidth]{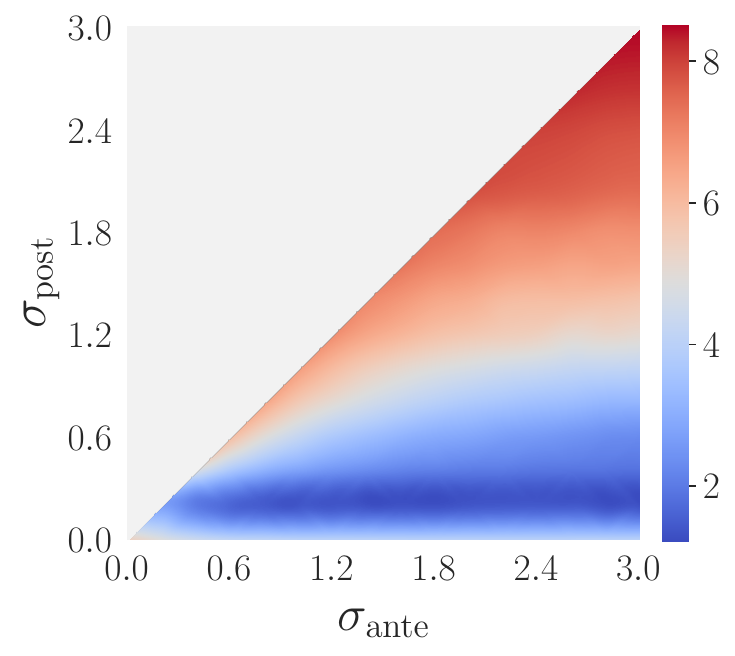}
        \vspace{-2mm}
        \caption{Comparison of cascade routing with cascading.}
        \label{fig:cascade_diff}
    \end{subfigure}
    \vspace{-3mm}
    \caption{Difference in AUC performance between cascade routing and baseline strategies on RouterBench for various noise values. Red indicates cascade routing is much better, while blue indicates it is only a bit better.}
    \vspace{-4mm}
    \label{fig:heatplots}

\end{figure*}
\vspace{-1mm}
\section{Experimental Evaluation}\label{sec:experiments}

We now evaluate the performance of cascade routing and demonstrate that it significantly outperforms all other strategies. Additionally, we show that our new cascading approach outperforms the threshold-based cascading method. For this purpose, we first conduct experiments on RouterBench \citep{routerbench}, a benchmark specifically designed to evaluate routing and cascading (\cref{sec:routerbench}). Next, we test cascade routing on several additional benchmarks to evaluate its performance in more realistic scenarios (\cref{sec:real_world}). In the appendix, we perform an ablation study to examine the impact of various design choices in cascade routing on performance and runtime (\cref{app:appendix-runtime}). Finally, in \cref{app:appendix-results}, we show detailed results as well as cost-quality tradeoff curves for several benchmarks.

\subsection{RouterBench} \label{sec:routerbench}

RouterBench \citep{routerbench} is a benchmark developed to evaluate the efficacy of different model selection strategies. It includes questions from seven diverse benchmarks, such as MMLU \citep{mmlu}, GSM8k \citep{gsm8k}, and MBPP \citep{mbpp}, alongside answers from eleven different models ranging from GPT-4 \citep{gpt4} to Mistral-7B \citep{mistral}.

\paragraph{Quality and Cost Estimates} 
Similar to \citep{routerbench}, we estimate quality and cost by adding zero-centered Gaussian noise to their true values. Both cost and quality estimates are modeled as linear functions fitted on these noisy signals. Thus, the quality estimate can be expressed as $\hat{q}_{W,b}(x) = W(q(x) + \epsilon) + b$ where $\epsilon \sim \mathcal{N}(0, \sigma^2)$. A similar expression holds for the cost estimate. We define the variance of the noisy signal as $\sigma_\text{ante}^2$ before model computation (ex-ante estimates) and $\sigma_\text{post}^2$ after (post-hoc estimates). 

To explore different uncertainty levels, we vary the variances to simulate low-, medium-, and high-noise scenarios, with exact values for the variances given in \cref{app:experimental-details:routerbench}.

\paragraph{Models} We evaluate cascade routing on RouterBench using three, five, and eleven models available for model selection, ensuring a comprehensive evaluation across a range of scenarios. The exact models are provided in \cref{app:experimental-details:routerbench}.

\paragraph{Strategies} We compare cascade routing against several baseline strategies, including the routing strategy described in \cref{sec:routing}, the threshold-based cascading approach from prior work (\cref{cor:cascading:threshold}), and the optimal cascading strategy (\cref{thm:cascading:main}). Additionally, as in \citep{routerbench}, we include a baseline that linearly interpolates cost and quality on the Pareto frontier of the models.

\paragraph{Evaluation Metric} For each method, we evaluate performance using cost budgets ranging from the cheapest to the most expensive model. This produces a quality-cost curve for each strategy. Following \citep{routerbench}, we use the Area Under the Curve (AUC) as the performance metric.

\paragraph{Results} \cref{tab:routerbench} presents the results for the zero-shot setting, with the five-shot results detailed in \cref{app:appendix-results}. Cascade routing consistently outperforms all baseline strategies with performance gains between $1\%$ to $4\%$, which measured relatively to the naive linear interpolation baseline means that cascade routing improves by $13\%$ to $80\%$ over the baselines. This performance gap widens as more models are available and narrows under higher noise levels, indicating that cascade routing is most effective with large model sets and accurate cost and quality estimates. Furthermore, our new cascading strategy outperforms the threshold-based cascade by up to $2\%$, reinforcing the practical relevance of our theoretical results. 

\paragraph{Quality Estimation} To better understand the impact of quality estimation on model selection strategies, we additionally conduct experiments with five models under a broader range of varying noise levels. \cref{fig:heatplots} illustrates the difference in AUC performance between cascade routing and baseline strategies for all possible noise levels. The results demonstrate that cascade routing consistently outperforms the baselines, achieving up to an $8\%$ improvement for cascading and up to a $12\%$ improvement for routing. Notably, the performance gap highlights key differences between the cascading and routing strategies. For routing, the value of $\sigma_\text{ante}$ is critical—high $\sigma_\text{ante}$ significantly reduces performance compared to cascade routing. Conversely, for cascading, $\sigma_\text{post}$ plays a more influential role, with higher values causing substantial performance degradation. These findings underscore the importance of accurate quality estimation for both strategies. Cascade routing proves to be a more robust solution by unifying the strengths of both approaches and effectively leveraging low $\sigma_\text{ante}$ and low $\sigma_\text{post}$ to enhance performance.

\vspace{-1mm}
\subsection{Real-World Benchmarks} \label{sec:real_world}
\vspace{-1mm}

We now show that cascade routing outperforms baselines on more realistic benchmarks with quality estimates that can be used in real-world scenarios. We differentiate in our analysis between benchmarks where accurate quality estimation is available and those where it is not.

\vspace{-1mm}
\paragraph{Accurate Quality Estimation} We first evaluate cascade routing on two benchmarks that allow for accurate quality estimation. First, in the domain of software engineering, it is often easier to generate tests to reproduce specific issues than to fix them. We therefore use SWE-Bench \citep{swebench} as a benchmark where accurate post-hoc quality estimation is available. Specifically, we assume that the quality of a model's response can be accurately estimated by testing it on the ground-truth test cases. Second, to simulate a use-case where ex-ante quality estimation is accurate, we use the Math and Coder models from the \textsc{Qwen-2.5} model family \citep{qwen25-math,qwen25-code} and evaluate them on a combination of Minerva Math \citep{minerva} and LiveCodeBench \citep{livecodebench}. To obtain accurate quality estimates, we incorporate a sample's origin benchmark as a feature in the quality estimation model. For all details about the benchmarks, models, and estimators, we refer to \cref{app:experimental-details:routerbench}.

\begin{table*}[h!]
    \centering
    \footnotesize
    \caption{AUC scores on practical benchmarks. On the left, resp. right, side we show the benchmarks with good, resp. poor, quality estimates. The highest numbers are bolded, and underlined numbers are within the $95\%$ confidence intervals of the highest number. For a discussion on confidence intervals, refer to \cref{app:appendix-confidence}. In \cref{sec:appendix-experiments-detailed-results}, we present benchmark-specific AUC values for results averaged over several benchmarks.}
    \vspace{-2mm}
    \begin{tabular}{
        l
        x{2}{2}
        x{2}{2}
        x{2}{2}|
        x{2}{2}
        x{2}{2}
        x{2}{2}
        x{2}{2}
    }
    \toprule
    & \multicolumn{2}{c}{SWE-Bench} & {Math+Code} & \multicolumn{2}{c}{Classification} & \multicolumn{2}{c}{Open-Form} \\
    \cmidrule(lr){2-3} \cmidrule(lr){4-4} \cmidrule(lr){5-6} \cmidrule(lr){7-8}
    & {\textsc{10 Models}} & {\textsc{5 Models}} & \textsc{Qwen} & {\textsc{Llama}} & {\textsc{Gemma}}& {\textsc{Llama}} & {\textsc{Gemma}} \\
    \midrule
    Linear Interp. & 40.505392 & 38.637494 & 39.626901 & 74.282600 & 61.683007 & 79.113000 & 54.097636 \\
    Routing & 40.470854 & 39.395796 & 47.463307 & 74.922474 & \underline{\num{64.44}} & 79.318910 & 58.395591 \\
    Cascade (Baseline) & 38.519802 & 45.890315 & 37.677355 & 74.807582 & 54.319901 & 79.232862 & 56.175065 \\
    \midrule
    Cascade (Ours) & \underline{\num{53.20}} & \underline{\num{50.94}}  & 46.763934 & \underline{\num{75.46}} & 62.786118 & 79.224978 & 56.184755 \\
    Cascade Routing (Ours) & \bfseries 54.121852 &  \bfseries 51.090586 & \bfseries 48.549854 & \bfseries75.516728 & \bfseries 64.835861 & \bfseries 79.877363 & \bfseries 59.656503 \\
    \bottomrule
    \end{tabular}
    \label{tab:real}

\end{table*}

\paragraph{Results}  \cref{tab:real} (left) shows the results for both benchmarks. In SWE-Bench, our methods outperform baseline strategies by up to $14\%$. As expected, the routing strategy does not outperform the trivial baseline on this benchmark, as ex-ante quality estimates are insufficient. Interestingly, despite perfect post-hoc quality estimation for SWE-Bench, the baseline cascade strategy also performs poorly. This is due to the binary feedback of the quality estimator, which leads the threshold $\tau$ of the baseline cascade to either admit all models ($\tau = 0$) or only correct ones ($\tau > 0$).

For Minerva Math and LiveCodeBench, the opposite trend holds true. With accurate ex-ante quality estimation, the routing strategy achieves strong performance, surpassing the baseline cascade strategy by $10\%$. However, the cascade routing strategy still outperforms all methods, highlighting its robustness across diverse benchmarks and quality estimation scenarios. Interestingly, despite poor post-hoc quality estimation, our cascading strategy nearly matches the performance of routing. This suggests that the cascade effectively leverages ex-ante quality estimation to make informed decisions, unlike the baseline cascade.

We highlight that the cost estimator for SWE-Bench is latency-based, computing cost as the time it takes to complete the task. In contrast, the estimator for Minerva Math and LiveCodeBench uses the cost of the generation. Thus, cascade routing can adapt to different cost estimators.

\paragraph{Poor Quality Estimation} We perform experiments on classification and open-form reasoning tasks where there is no known accurate quality estimator. The classification benchmarks include ARC-Challenge \citep{arc}, MMLU-Pro \citep{mmlu_pro}, and MixEval \citep{mixeval}. For open-form reasoning tasks, we use MMLU-Pro and GSM8k \citep{gsm8k}. In classification, models select a single option representing their answer, with no intermediate reasoning process. In contrast, open-form reasoning allows models to generate their answers after reasoning. In this section, we evaluate two model families consisting of three models, \textsc{Llama} and \textsc{Gemma}, and show similar numbers for the \textsc{Mistral} model family in \cref{app:appendix-results}. We create a quality estimator based on state-of-the-art work \citet{quality_cascadegupta2024}, which uses log probabilities as features. For full details on the benchmarks, models, and cost and quality estimators, we refer to \cref{app:experimental-details}.

\paragraph{Results} \cref{tab:real} (right) presents the results for the \textsc{Llama} and \textsc{Gemma} model families across both benchmarks. Cascade routing consistently performs on par with or outperforms all baselines, though with much narrower margins reaching up to $1.2\%$. This reduced gain can be attributed to the fact that the quality and cost estimates are very noisy, leading to performance gains over the naive baseline similar to those observed in very high-noise scenarios on RouterBench.

\section{Related Work}\label{sec:related}
\paragraph{Routing} Routing is a widely studied problem in machine learning, particularly in the task of directing input queries to specialized models. One of the most common applications of routing is model selection for natural language input queries with a known answer \citep{confidencetokens,routingding2024,routinghari2023,routing_liu2024,qualityjang2023,nguyen2024metallm,routingsakota2024,routingschnitzer2023}. All these works train a model to predict whether a given model will correctly answer a query. Though the setups in these works are largely similar, they vary in certain specifics, such as the type of input queries or the features used for quality estimation.

Routing is also applied in other areas. For instance, \citet{routinglu2024,routingong2024} use preference data to train a quality estimator, which facilitates routing in scenarios involving real-world user queries where clear ground-truth answers may not exist. Additionally, \citet{routing_proofchen2022} employ routing for API selection in multi-label classification tasks, focusing on directing queries to the appropriate API based on task requirements. Similarly, \citet{routing_quality_zhang2024} apply routing in software agent environments, directing user issues to the agent most suited to handle them. Finally, \citet{domainawarerouting} dynamically routes token generation instead of entire queries, allowing for more fine-grained routing decisions.

\paragraph{Cascading} 
Cascading techniques are primarily used to reduce inference costs by employing smaller models initially and only cascading to larger models if the smaller ones fail to provide a sufficiently accurate answer. Most often, cascading decisions are based on the smaller model's confidence in its own predictions \citep{frugalgpt,datashunt,routing_cascading_ramirez2024,cascadingvarshney2022}. However, alternative techniques also exist. For example, \citet{cascadingmadaan2024} propose running models multiple times and measuring the variance in their responses to decide whether to cascade to a larger model.

For classification tasks, early stopping is another cascading strategy \citep{earlystop_cascadebert,earlystop_schuster2022}. In this approach, the cascade halts when a model's intermediate layers generate representations that are sufficiently informative to predict the correct class. This reduces computational costs by avoiding the need to process every query through the entire model.

There has also been specific research on quality estimation within cascading frameworks. \citet{quality_cascadegupta2024} examine various measures of uncertainty in language model answers, evaluating their impact on cascading performance. Meanwhile, \citet{qualityjitkrittum2023} explore failure cases in cascading mechanisms that rely on uncertainty, introducing alternative quality measures that enhance cascade efficiency. Furthermore, \citet{cascadexue2023} apply cascading to majority voting for a single model to obtain a method called dynamic voting: the cascade stops depending on the aggregated answers of all previous model computations. Lastly, \citep{multiobjectivecascade} propose the use of multi-objective quality metrics to guide cascading decisions and do not solely focus on accuracy.

All works mentioned here can be seen as an instantiation of the thresholding mechanism outlined in \cref{cor:cascading:threshold} with application-specific quality and cost estimates.

\section{Conclusion}\label{sec:conclusion}

In this work, we introduced a novel framework for routing and cascading that enabled us to propose theoretically optimal strategies for both paradigms. Further, we used this analysis to propose a new paradigm for model selection, cascade routing, which combines the benefits of routing and cascading. We showed that cascade routing can significantly outperform its baselines, especially with good quality and cost estimates. We also find that our new cascading strategy significantly outperforms existing approaches to cascading, showing our theoretical analysis also leads to practical gains.

\section*{Impact Statement}

Our work can significantly impact the field of model selection strategies. By providing a theoretical foundation for routing and cascading, we have shown that these strategies can be improved by using more accurate quality and cost estimates. Cascade routing combines the strengths of both routing and cascading and offers a more flexible and effective model selection strategy. Furthermore, by underscoring the importance of quality estimation, we highlighted a critical area for future research in model selection strategies that could lead to further improvements in this area. 

\section*{Acknowledgements}
This work was funded in part by the Swiss National Science Foundation (SNSF) [200021\_207967].

This work has been done as part of the EU grant ELSA (European Lighthouse on Secure and Safe AI, grant agreement no. 101070617). Views and opinions expressed are however those of the authors only and do not necessarily reflect those of the European Union or European Commission. Neither the European Union nor the European Commission can be held responsible for them.

The work has received funding from the Swiss State Secretariat for Education, Research and Innovation (SERI).
\message{^^JLASTBODYPAGE \thepage^^J}

\bibliography{paper_files/references}
\bibliographystyle{icml2025}

\message{^^JLASTREFERENCESPAGE \thepage^^J}

\ifbool{includeappendix}{%
	\clearpage
	\appendix
	\onecolumn
	\section{Routing}\label{app:routing}

First, we explain how to obtain the hyperparameters $\lambda$ and $\gamma$ for the routing strategy. We then provide a more exact formulation of the routing optimization problem and prove \cref{thm:routing:main}.

\textbf{Hyperparameters} Due to the second part of \cref{thm:routing:main}, we only need to find \emph{a} set of hyperparameters $\lambda$ and $\gamma$ that achieve the cost budget. Indeed, all routing strategies that have an expected cost that is exactly equal to $B$ and can be written as a convex combination of $s_\textsc{min}^{\lambda'}$ and $s_\textsc{max}^{\lambda'}$ for some $\lambda' \in \mathbb{R}^+$ achieve the same optimal quality.

To determine these parameters, we estimate the cost of a strategy using a validation dataset $D$ that is representative of the query distribution $\mathcal{X}$. We then perform a hyperparameter search to find optimal values of $\lambda$ and $\gamma$. By leveraging several properties of routing strategies, one can show that this hyperparameter search can be reduced to a single binary search over $\lambda$, enabling a quick and efficient hyperparameter optimization process.

\paragraph{Proving the Theorem} To prove \cref{thm:routing:main}, we first rewrite the routing optimization problem in \cref{eq:routing} as a linear program over functions $s : \mathcal{X} \rightarrow \mathbb{R}^k$ instead of functions $s : \mathcal{X} \rightarrow \Delta_k$. This makes the optimization problem more tractable. Specifically, \cref{eq:routing} can be rewritten as follows:
\begin{equation} \label{eq:routing:exact}
    \begin{aligned}
        \max_{r} \quad & \mathbb{E}_{x \sim \mathcal{X}} \left[ \sum_{i=1}^k s_i(x) \hat{q}_i(x) \right]\\
        \text{s.t.} \quad & \mathbb{E}_{x \sim \mathcal{X}} \left[ \sum_{i=1}^k s_i(x) \hat{c}_i(x) \right] \leqslant B \\
        & \forall i \in \{1, ..., k\}: \forall x \in \mathcal{X}: s_i(x) \geq 0 \land \sum_{j=1}^k s_j(x) = 1
    \end{aligned}
\end{equation}

We then rewrite \cref{thm:routing:main} to allow for a more exact formulation of the optimal routing strategy:

\begin{theorem}{(Optimal Routing Strategy)}\label{thm:routing}
    Suppose there exists an admissible solution to the set of constraints in \cref{eq:routing:exact}. For any $\lambda \in \mathbb{R}^+$, let $S_\lambda$ be the set of routing strategies $s$ that satisfy the following constraints:
    \begin{equation}
       \forall x \in \mathcal{X}, \forall i \in \{1, ..., k\}: \hat{q_i}(x) - \lambda \hat{c_i}(x) < \max_j \hat{q_j}(x) - \lambda \hat{c_j}(x) \Rightarrow s_i(x) = 0
    \end{equation}
    If there exists a strategy in $S_0$ that has a cost less than or equal to $B$, then this strategy achieves the optimal quality. Otherwise, there exists a $\lambda^* \in \mathbb{R}^+$ such that $S_\lambda$ contains a routing strategy that has exactly cost $B$ and all routing strategies in $\bigcup_{\lambda \in \mathbb{R}^+}S_\lambda$ that have cost $B$ achieve the same optimal quality.
   \end{theorem}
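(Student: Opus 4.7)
The plan is to treat the optimization in \cref{eq:routing:exact} as an (infinite-dimensional) linear program and apply Lagrangian duality with a single nonnegative multiplier $\lambda$ for the budget constraint. Defining
\[
L(s,\lambda) \;=\; \mathbb{E}_{x\sim\mathcal{X}}\!\left[\sum_{i=1}^k s_i(x)\bigl(\hat q_i(x) - \lambda\,\hat c_i(x)\bigr)\right] \;+\; \lambda B,
\]
the maximization of $L(\cdot,\lambda)$ subject only to $s(x)\in\Delta_k$ decouples pointwise: $s$ is a maximizer iff, for almost every $x$, it places all mass on indices achieving $\max_i(\hat q_i(x)-\lambda\hat c_i(x))$. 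This is exactly the set $S_\lambda$ from the statement, and every $s\in S_\lambda$ attains the common value $V(\lambda):=\mathbb{E}_x[\max_i(\hat q_i(x)-\lambda\hat c_i(x))]+\lambda B$.

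I would then dispose of the easy case. If some $s\in S_0$ satisfies $\mathrm{cost}(s)\leq B$, then $s$ is primal feasible and attains the unconstrained quality upper bound $\mathbb{E}_x[\max_i\hat q_i(x)]$, so it is optimal. Otherwise, I construct $\lambda^\star$ via an intermediate-value argument. For each $\lambda>0$ let $s_{\textsc{min}}^\lambda, s_{\textsc{max}}^\lambda\in S_\lambda$ break ties toward the cheapest, resp.\ most expensive, arg-max model, and write $C_{\min}(\lambda), C_{\max}(\lambda)$ for their expected costs. A tie-breaking argument shows $C_{\min}$ is non-increasing and right-continuous, $C_{\max}$ is non-increasing and left-continuous, $C_{\min}\leq C_{\max}$, and both tend to $\mathbb{E}_x[\min_i\hat c_i(x)]$ as $\lambda\to\infty$. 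Since the easy case failed, $C_{\min}(0^+)>B$, and since admissibility gives $B\geq\mathbb{E}_x[\min_i\hat c_i(x)]$, there is a $\lambda^\star>0$ with $C_{\min}(\lambda^\star)\leq B\leq C_{\max}(\lambda^\star)$. Choosing the appropriate $\gamma\in[0,1]$, the mixture $\gamma s_{\textsc{min}}^{\lambda^\star}+(1-\gamma)s_{\textsc{max}}^{\lambda^\star}$ lies in $S_{\lambda^\star}$ and has cost exactly $B$.

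Optimality then follows from complementary slackness. For any $s\in S_\lambda$ with $\mathrm{cost}(s)=B$, we have $L(s,\lambda)=\mathrm{quality}(s)$, and since $s$ achieves the pointwise max, $\mathrm{quality}(s)=V(\lambda)$. For every primal feasible $s'$, $\mathrm{quality}(s')\leq\mathrm{quality}(s')+\lambda(B-\mathrm{cost}(s'))=L(s',\lambda)\leq V(\lambda)$, so $s$ is primal optimal. Applied across all $\lambda$, this also shows that every element of $\bigcup_{\lambda\in\mathbb{R}^+}S_\lambda$ whose cost equals $B$ attains the same optimal quality.

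The step I expect to be most delicate is the monotonicity/continuity analysis of $C_{\min}$ and $C_{\max}$ over a potentially continuous query space $\mathcal{X}$: at values of $\lambda$ where a positive-measure set of queries has a tie in the arg-max, $C_{\min}$ and $C_{\max}$ can differ by a positive jump. One must argue that such $\lambda$ are exactly where a nontrivial convex combination is needed, that the set of such jump points is measure-theoretically well behaved, and that $\{\gamma C_{\min}(\lambda)+(1-\gamma)C_{\max}(\lambda):\gamma\in[0,1]\}$ fills the gap so that the target $B$ is actually hit. All remaining ingredients are standard LP-duality bookkeeping.
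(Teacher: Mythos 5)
Your proposal follows essentially the same route as the paper's proof: the pointwise arg-max characterization of $S_\lambda$, monotonicity of cost in $\lambda$, an intermediate-value argument with a convex combination at a critical $\lambda^\star$ to hit cost $B$ exactly, and the weak-duality chain of inequalities showing every cost-$B$ strategy in $\bigcup_\lambda S_\lambda$ is optimal are all exactly the steps the paper takes. The one delicate point you flag---the behaviour of $C_{\min}$ and $C_{\max}$ at tie values of $\lambda$---is precisely what the paper dispatches by assuming the set $\Lambda$ of tie points is finite (justified by finite numerical precision of the estimators), which makes breakpoints isolated, yields $S_{\lambda^\star-\epsilon}\subset S_{\lambda^\star}$ and $S_{\lambda^\star+\epsilon}\subset S_{\lambda^\star}$, and lets the convex combination inside the convex set $S_{\lambda^\star}$ fill the cost gap.
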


There is one extra condition mentioned here that we omitted in the main text.
The requirement of having at least an admissible solution to the constraints in \cref{eq:routing:exact} is necessary to ensure that the set of possible solutions to \cref{eq:routing:exact} is not empty. For instance, the cost budget $B$ can be too low such that even running the cheapest model for each query is too expensive. 

The formulation of $s_\textsc{opt}$ as a convex combination of $s_\textsc{min}^\lambda$ and $s_\textsc{max}^\lambda$ is a direct consequence of \cref{thm:routing}. Indeed, let $\lambda^*$ be as defined in \cref{thm:routing}. Then $s_\textsc{min}^{\lambda^*}$, resp. $s_\textsc{max}^{\lambda^*}$, must have the lowest, resp. highest, cost among all routing strategies in $S_{\lambda^*}$. Since there is a routing strategy in $S_{\lambda^*}$ that has cost $B$, there must exist a convex combination of $s_\textsc{min}^{\lambda^*}$ and $s_\textsc{max}^{\lambda^*}$ that also has cost $B$ and thus achieves the optimal quality.

We first prove several lemmas before proving the theorem.

\begin{lemma}\label{lem:routing:0} $S_\lambda$ is non-empty and convex for all $\lambda \in \mathbb{R}^+$.
\end{lemma}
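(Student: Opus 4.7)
The plan is to handle non-emptiness and convexity separately, since both are essentially direct consequences of the definition of $S_\lambda$. The only mild subtlety is that the paper's definition of a routing strategy actually constrains $s(x)$ to lie in the simplex $\Delta_k$ (i.e., $s_i(x) \geq 0$ and $\sum_i s_i(x) = 1$), so I need to make sure that both checks respect this constraint.

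For non-emptiness, I would fix $x \in \mathcal{X}$ and note that since $\{1, \dots, k\}$ is finite, the set of maximizers $A_\lambda(x) = \arg\max_i (\hat{q}_i(x) - \lambda \hat{c}_i(x))$ is non-empty. I then construct an explicit witness $s \in S_\lambda$ by putting all mass on, say, the smallest-indexed element of $A_\lambda(x)$ for each $x$ (i.e., $s_{i^*(x)}(x) = 1$ and $s_i(x) = 0$ otherwise, where $i^*(x) = \min A_\lambda(x)$). This $s$ trivially satisfies the simplex constraints and the defining implication of $S_\lambda$ (the implication is vacuous when $i = i^*(x)$ and its conclusion holds by construction otherwise).

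For convexity, I would take $s, s' \in S_\lambda$ and $t \in [0,1]$, and set $s'' = t s + (1-t) s'$. I would then check three things in order: (i) $s''_i(x) \geq 0$ because it is a convex combination of non-negative reals; (ii) $\sum_i s''_i(x) = t \cdot 1 + (1-t) \cdot 1 = 1$, so $s''(x) \in \Delta_k$; and (iii) whenever $\hat{q}_i(x) - \lambda \hat{c}_i(x) < \max_j (\hat{q}_j(x) - \lambda \hat{c}_j(x))$, both $s_i(x)$ and $s'_i(x)$ vanish (since $s, s' \in S_\lambda$), and therefore so does $s''_i(x)$.

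I don't expect any genuine obstacle here; the proof is basically a one-paragraph unpacking of the definitions. The only thing to be careful about is to cite the convention that strategies live in $\Delta_k$ pointwise, so that the convexity argument lands inside the set of admissible strategies rather than only inside the linear space $\mathbb{R}^k$.
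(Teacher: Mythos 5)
Your proof is correct and follows essentially the same route as the paper: non-emptiness via the deterministic strategy concentrating mass on a maximizer of $\hat{q}_i(x) - \lambda \hat{c}_i(x)$, and convexity by checking that whenever the tradeoff is strictly below the maximum both components vanish, hence so does the convex combination. Your phrasing via the contrapositive of the defining implication is a minor (and if anything cleaner) variation of the paper's argument, which states the same fact in terms of the support of the combined strategy.
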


\begin{proof}
    Non-emptiness follows from the fact that the routing strategy that assigns all probability mass for a sample $x$ to a model $i$ for which $\hat{q_i}(x) - \lambda \hat{c_i}(x)$ is maximal, is in $S_\lambda$. For convexity, let $s^{(1)}, s^{(2)} \in S_\lambda$ be arbitrary. Let $s^\gamma$ be the convex combination of $s^{(1)}$ and $s^{(2)}$ with weight $\gamma \in [0, 1]$. Let $x \in \mathcal{X}$ be arbitrary. Then, $s_i^\gamma(x) > 0$ if and only if $s_i^{(1)}(x) > 0$ or $s_i^{(2)}(x) > 0$. Since $s^{(1)}, s^{(2)} \in S_\lambda$, we have $\hat{q_i}(x) - \lambda \hat{c_i}(x) \geqslant \max_j \hat{q_j}(x) - \lambda \hat{c_j}(x)$ for all $i$ such that $s_i^{(1)}(x) > 0$ or $s_i^{(2)}(x) > 0$. This implies that $\hat{q_i}(x) - \lambda \hat{c_i}(x) \geqslant \max_j \hat{q_j}(x) - \lambda \hat{c_j}(x)$ for all $i$ such that $s_i^\gamma(x) > 0$. Thus, $s^\gamma \in S_\lambda$.
\end{proof}

\begin{lemma}\label{lem:routing:1} Let $\lambda_1 < \lambda_2$ and $s^{(1)}$, resp. $s^{(2)}$ be arbitrary routing strategies in $S_{\lambda_1}$, resp. $S_{\lambda_2}$. Then, the cost of $s^{(1)}$ is greater or equal to the cost of $s^{(2)}$, i.e.,
\begin{equation*}
    \mathbb{E}_{x \sim \mathcal{X}} \left[ \sum_{i=1}^k s^{(1)}_i(x) \hat{c}_i(x) \right] \geqslant \mathbb{E}_{x \sim \mathcal{X}} \left[ \sum_{i=1}^k s^{(2)}_i(x) \hat{c}_i(x) \right] 
\end{equation*}
\end{lemma}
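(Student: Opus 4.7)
The plan is to prove the inequality pointwise in $x$ and then integrate. Specifically, I will show that for every $x \in \mathcal{X}$ one has
\[
\sum_{i=1}^k s^{(1)}_i(x)\,\hat{c}_i(x) \;\geqslant\; \sum_{i=1}^k s^{(2)}_i(x)\,\hat{c}_i(x),
\]
after which the claim follows by taking expectation over $x \sim \mathcal{X}$ and using linearity of expectation.

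To establish the pointwise inequality, I would fix $x$ and exploit the defining property of $S_\lambda$: any $i_1$ in the support of $s^{(1)}(x)$ maximizes $\hat{q}_i(x) - \lambda_1 \hat{c}_i(x)$, and any $i_2$ in the support of $s^{(2)}(x)$ maximizes $\hat{q}_i(x) - \lambda_2 \hat{c}_i(x)$. Writing the two cross-optimality inequalities
\[
\hat{q}_{i_1}(x) - \lambda_1 \hat{c}_{i_1}(x) \;\geqslant\; \hat{q}_{i_2}(x) - \lambda_1 \hat{c}_{i_2}(x), \qquad \hat{q}_{i_2}(x) - \lambda_2 \hat{c}_{i_2}(x) \;\geqslant\; \hat{q}_{i_1}(x) - \lambda_2 \hat{c}_{i_1}(x),
\]
and adding them, the quality terms cancel and one obtains $(\lambda_2 - \lambda_1)\bigl(\hat{c}_{i_1}(x) - \hat{c}_{i_2}(x)\bigr) \geqslant 0$. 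Since $\lambda_2 > \lambda_1$, this gives $\hat{c}_{i_1}(x) \geqslant \hat{c}_{i_2}(x)$ for every such pair $(i_1,i_2)$.

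Consequently, the minimum cost in the support of $s^{(1)}(x)$ is at least the maximum cost in the support of $s^{(2)}(x)$. Because both $s^{(1)}(x)$ and $s^{(2)}(x)$ are probability distributions, each pointwise sum $\sum_i s^{(t)}_i(x)\hat{c}_i(x)$ lies between the minimum and the maximum cost in its respective support, so
\[
\sum_{i=1}^k s^{(1)}_i(x)\,\hat{c}_i(x) \;\geqslant\; \min_{i \in \mathrm{supp}(s^{(1)}(x))} \hat{c}_i(x) \;\geqslant\; \max_{i \in \mathrm{supp}(s^{(2)}(x))} \hat{c}_i(x) \;\geqslant\; \sum_{i=1}^k s^{(2)}_i(x)\,\hat{c}_i(x).
\]
Taking expectation then yields the lemma.

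I do not expect any serious obstacle here: the argument is a standard revealed-preference / exchange argument, and the only subtlety is being careful that the pointwise statement uses arbitrary representatives in the supports, which is exactly what the two cross-inequalities supply. The degenerate case where the maximum in the definition of $S_\lambda$ is attained by several models is handled automatically since the deduced inequality $\hat{c}_{i_1}(x) \geqslant \hat{c}_{i_2}(x)$ still holds (with equality possible).
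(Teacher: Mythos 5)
Your proof is correct and takes essentially the same route as the paper: both argue pointwise in $x$ and rest on the same exchange/cross-optimality inequalities between a model supported under $\lambda_1$ and one supported under $\lambda_2$, which force the cost ordering $\hat{c}_{i_1}(x) \geqslant \hat{c}_{i_2}(x)$. Your direct formulation (every supported cost of $s^{(1)}$ dominates every supported cost of $s^{(2)}$, so the weighted averages compare) is if anything a slightly cleaner packaging than the paper's proof by contradiction, which additionally asserts the existence of a suitable model pair $i,j$ without detailed justification.
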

\begin{proof}
We show that for any $x \in \mathcal{X}$, the cost of $s^{(1)}$ is greater or equal to the cost of $s^{(2)}$. Let $x \in \mathcal{X}$ be arbitrary. Suppose $s^{(1)}$ is strictly cheaper than $s^{(2)}$. Then, there must exist a model pair $i, j$ such that $\hat{c}_i(x) < \hat{c}_j(x)$, $s^{(1)}_i(x) > s^{(2)}_i(x) \geqslant 0$, and $s^{(2)}_j(x) > s^{(1)}_j(x) \geqslant 0$. However, $s^{(1)}_i(x) > 0$ implies

\begin{equation*}
    \hat{q}_i(x) - \lambda_1 \hat{c}_i(x)\geqslant  \hat{q}_j(x) - \lambda_1 \hat{c}_j(x).
\end{equation*}
Furthermore, since $\lambda_1 - \lambda_2 < 0$, we have
\begin{equation*}
    \hat{c}_i(x)(\lambda_1 - \lambda_2) > \hat{c}_j(x)(\lambda_1 - \lambda_2).
\end{equation*}
Adding these two inequalities gives
\begin{equation*}
    \hat{q}_i(x) - \lambda_2 \hat{c}_i(x) > \hat{q}_j(x) - \lambda_2 \hat{c}_j(x),
\end{equation*}
which is a contradiction with $s^{(2)}_j(x) > 0$. Thus, the cost of $s^{(1)}$ is greater or equal to the cost of $s^{(2)}$.
\end{proof}

\begin{lemma}\label{lem:routing:2} Let $\Lambda$ be the set of points $\lambda \in \mathbb{R}$ such that there exist an $x \in \mathcal{X}$ and $i \neq j$ such that $\hat{q_i}(x) - \lambda \hat{c_i}(x) = \hat{q_j}(x) - \lambda \hat{c_j}(x)$. Let $\lambda_1 < \lambda_2$ be such that $[\lambda_1, \lambda_2] \cap \Lambda = \emptyset$. Then, $S_{\lambda_1} = S_{\lambda_2}$. Furthermore, if $[\lambda_1, \lambda_2] \cap \Lambda = \{\lambda^*\}$, then $S_\lambda \subset S_{\lambda^*}$ for all $\lambda \in [\lambda_1, \lambda_2]$.
\end{lemma}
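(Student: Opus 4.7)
The key observation is that for each fixed $x \in \mathcal{X}$ and each $i$, the map $\lambda \mapsto \tau_i(x,\lambda) := \hat{q}_i(x) - \lambda \hat{c}_i(x)$ is affine, so all pairwise differences $\tau_i(x,\lambda) - \tau_j(x,\lambda)$ are affine in $\lambda$. I would introduce the argmax set $A(x,\lambda) := \{i : \tau_i(x,\lambda) = \max_j \tau_j(x,\lambda)\}$ and note that, by definition, $s \in S_\lambda$ iff for every $x$ the support of $s(x)$ is contained in $A(x,\lambda)$. Hence both claims reduce to statements about how $A(x,\lambda)$ varies with $\lambda$.

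\textbf{First claim.} Suppose $[\lambda_1,\lambda_2] \cap \Lambda = \emptyset$. Fix $x$ and $i \neq j$. Then $\tau_i(x,\lambda)-\tau_j(x,\lambda)$ is affine and never vanishes on $[\lambda_1,\lambda_2]$ (by definition of $\Lambda$), so it has constant sign there; in particular the strict ordering of all the values $\tau_1(x,\lambda),\dots,\tau_k(x,\lambda)$ is preserved across the interval. Consequently $A(x,\lambda)$ is constant on $[\lambda_1,\lambda_2]$, and in particular $A(x,\lambda_1) = A(x,\lambda_2)$ for every $x$. Since $S_\lambda$ depends on $\lambda$ only through the family $\{A(x,\lambda)\}_{x \in \mathcal{X}}$, this yields $S_{\lambda_1} = S_{\lambda_2}$.

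\textbf{Second claim.} Now suppose $[\lambda_1,\lambda_2] \cap \Lambda = \{\lambda^*\}$. I would split the interval into $[\lambda_1,\lambda^*)$, $\{\lambda^*\}$, and $(\lambda^*,\lambda_2]$. By the argument of the first claim applied to any closed sub-interval of $[\lambda_1,\lambda^*)$ (and likewise for $(\lambda^*,\lambda_2]$), the set $A(x,\lambda)$ is constant on each of the two open pieces; call these constant sets $A^-(x)$ and $A^+(x)$. To conclude $S_\lambda \subseteq S_{\lambda^*}$ for every $\lambda \in [\lambda_1,\lambda_2]$, it suffices to show $A^-(x) \cup A^+(x) \subseteq A(x,\lambda^*)$. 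For this, pick $i \in A^-(x)$: for every $\lambda \in [\lambda_1,\lambda^*)$ and every $j$ we have $\tau_i(x,\lambda) \geqslant \tau_j(x,\lambda)$, and letting $\lambda \uparrow \lambda^*$ (using continuity of the affine map $\lambda \mapsto \tau_i(x,\lambda)-\tau_j(x,\lambda)$) gives $\tau_i(x,\lambda^*) \geqslant \tau_j(x,\lambda^*)$, i.e., $i \in A(x,\lambda^*)$. The same limit argument from the right handles $A^+(x)$. Hence any $s \in S_\lambda$ has support inside $A(x,\lambda^*)$ for every $x$, i.e., $s \in S_{\lambda^*}$.

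\textbf{Main obstacle.} The proof is essentially bookkeeping; the only subtlety is the boundary handling at $\lambda^*$, where ties appear and $A(\cdot,\lambda^*)$ may strictly contain the one-sided limit sets. The limit argument above handles this, and it also covers the degenerate cases where $\lambda^* = \lambda_1$ or $\lambda^* = \lambda_2$ (one of the half-intervals is then empty and contributes nothing).
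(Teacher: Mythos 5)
Your proof is correct and rests on the same core fact as the paper's: the tradeoffs $\lambda \mapsto \hat{q}_i(x) - \lambda \hat{c}_i(x)$ are affine, so their pairwise ordering (and hence the argmax set constraining $S_\lambda$) can only change at points of $\Lambda$. The paper phrases both parts as proofs by contradiction via the intermediate value theorem, while you argue directly through constancy of the argmax sets and a one-sided limit at $\lambda^*$, but this is a presentational difference, not a different method.
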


\begin{proof}
    We first show the first statement by showing that $S_{\lambda_1} \setminus S_{\lambda_2} = \emptyset$. $S_{\lambda_2} \setminus S_{\lambda_1} = \emptyset$ follows analogously. Suppose there exists a routing strategy $s \in S_{\lambda_1} \setminus S_{\lambda_2}$. Since $s \notin S_{\lambda_2}$, there must exist an $x \in \mathcal{X}$ and model $i$ such that $s_i(x) > 0$ and $\hat{q_i}(x) - \lambda_2 \hat{c_i}(x) < \max_j \hat{q_j}(x) - \lambda_2 \hat{c_j}(x)$. Let $j$ be an index such that $\hat{q_i}(x) - \lambda_2 \hat{c_i}(x) < \hat{q_j}(x) - \lambda_2 \hat{c_j}(x)$. Since $s \in S_{\lambda_1}$, we have $\hat{q_i}(x) - \lambda_1 \hat{c_i}(x) \geqslant \hat{q_j}(x) - \lambda_1 \hat{c_j}(x)$. By continuity, there exists a $\lambda \in [\lambda_1, \lambda_2]$ such that $\hat{q_i}(x) - \lambda \hat{c_i}(x) = \hat{q_j}(x) - \lambda \hat{c_j}(x)$, which is a contradiction with $[\lambda_1, \lambda_2] \cap \Lambda = \emptyset$.

    Now suppose $[\lambda_1, \lambda_2] \cap \Lambda = \{\lambda^*\}$. Let $\lambda \in [\lambda_1, \lambda^*)$ be arbitrary and let $s \in S_\lambda$ be arbitrary. We show that $s \in S_{\lambda^*}$. For $\lambda \in (\lambda^*, \lambda_2]$, the proof is completely analogous. By contradiction, suppose there exists an $x \in \mathcal{X}$ and model $i$ such that $s_i(x) > 0$ and $\hat{q_i}(x) - \lambda^* \hat{c_i}(x) < \max_j \hat{q_j}(x) - \lambda^* \hat{c_j}(x)$. This means there exists a model $j$ such that $\hat{q_i}(x) - \lambda^* \hat{c_i}(x) < \hat{q_j}(x) - \lambda^* \hat{c_j}(x)$. Since $s \in S_\lambda$, we know that $\hat{q_i}(x) - \lambda \hat{c_i}(x) \geqslant \hat{q_j}(x) - \lambda \hat{c_j}(x)$. This implies that there must exist a $\lambda' \in [\lambda_1, \lambda^*)$ such that $\hat{q_i}(x) - \lambda' \hat{c_i}(x) = \hat{q_j}(x) - \lambda' \hat{c_j}(x)$. However, this is a contradiction with $[\lambda_1, \lambda^*) \cap \Lambda = \emptyset$. Thus, $s \in S_{\lambda^*}$.
\end{proof}

In what follows, we will assume that $|\Lambda| < \infty$. This is a very minor assumption. For instance, if $\hat{q}$ and $\hat{c}$ only take on a finite amount of values, this is trivially satisfied. Since estimators are implemented on a computer, they will always have a finite precision, meaning that $\hat{q}$ and $\hat{c}$ will only take on a finite amount of values.

\begin{lemma}\label{lem:routing:3}
    Let $\lambda_1 < \lambda_2$ and $s^{(1)}$, resp. $s^{(2)}$ be arbitrary routing strategies in $S_{\lambda_1}$, resp. $S_{\lambda_2}$, with costs resp. $B_1$ and $B_2$. Then, for any $B \in [B_1, B_2]$ there exists a $\lambda \in [\lambda_1, \lambda_2]$ such that $S_\lambda$ contains a routing strategy that has exactly cost $B$. 
\end{lemma}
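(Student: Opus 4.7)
The plan is to prove the claim by a discrete intermediate-value argument across the finite set $\Lambda \cap [\lambda_1, \lambda_2]$. The key observations are that $S_\lambda$ is a singleton for $\lambda \notin \Lambda$ and that, at each transition point in $\Lambda$, $S_\lambda$ is a convex set that bridges the gap between the two adjacent constant pieces; the result then follows by gluing the achievable costs into one connected interval.

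First, I would partition $[\lambda_1, \lambda_2]$. Writing $\Lambda \cap (\lambda_1, \lambda_2) = \{\mu_1 < \dots < \mu_n\}$ (finite because $|\Lambda| < \infty$) and setting $\mu_0 = \lambda_1$, $\mu_{n+1} = \lambda_2$, consider any $\lambda \in (\mu_i, \mu_{i+1})$. Then $\lambda \notin \Lambda$, so for every $x \in \mathcal{X}$ the value $\max_j (\hat{q}_j(x) - \lambda \hat{c}_j(x))$ is attained at a unique index; any $s \in S_\lambda$ must place all its mass on that unique argmax, and $S_\lambda$ is therefore a singleton. By \cref{lem:routing:2} this singleton is constant throughout the open sub-interval; call the corresponding cost $\beta_i$. \cref{lem:routing:1} then gives the monotonicity $\beta_0 \geq \beta_1 \geq \dots \geq \beta_n$.

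Next, at an interior transition point $\mu_i$ (with $1 \leq i \leq n$), I would apply \cref{lem:routing:2} to the short interval $[\lambda, \mu_i]$ for any $\lambda \in (\mu_{i-1}, \mu_i)$, which meets $\Lambda$ only at $\mu_i$, to obtain $S_\lambda \subseteq S_{\mu_i}$; a symmetric argument on the right gives the same for $\lambda \in (\mu_i, \mu_{i+1})$. Hence $S_{\mu_i}$ contains strategies of cost $\beta_{i-1}$ and of cost $\beta_i$. Since $S_{\mu_i}$ is convex by \cref{lem:routing:0} and the expected cost is a linear functional of $s$, the set of costs realised by $S_{\mu_i}$ is an interval that contains $[\beta_i, \beta_{i-1}]$. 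Combining all pieces, the union of the cost ranges of $S_\lambda$ over $\lambda \in [\lambda_1, \lambda_2]$ is a single connected interval: the piece attached to each $\mu_i$ bridges the constant values $\beta_{i-1}$ and $\beta_i$ of the two neighbouring open sub-intervals, while the same reasoning at the endpoints handles $\lambda_1$ and $\lambda_2$ regardless of whether they themselves lie in $\Lambda$. Since $B_1$ belongs to the cost range of $S_{\lambda_1}$ and $B_2$ to that of $S_{\lambda_2}$, this interval contains every $B$ between $B_1$ and $B_2$; picking the corresponding $\lambda$ and using convexity of $S_\lambda$ together with linearity of the cost functional yields an $s \in S_\lambda$ of cost exactly $B$.

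The main obstacle is the gluing step at the points of $\Lambda$: one must verify via \cref{lem:routing:2} that $S_{\mu_i}$ genuinely contains both adjacent singleton strategies, and then combine this with the convexity from \cref{lem:routing:0} and the linearity of the expected cost to guarantee that the full gap $[\beta_i, \beta_{i-1}]$ is realised rather than only its endpoints. A secondary, mostly bookkeeping, nuisance is treating the endpoints $\lambda_1$ and $\lambda_2$ uniformly whether or not they happen to lie in $\Lambda$, but this is handled by exactly the same convexity-plus-monotonicity argument used at the interior transition points.
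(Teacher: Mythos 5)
Your proposal is correct and uses exactly the paper's ingredients — cost monotonicity in $\lambda$ (\cref{lem:routing:1}), constancy of $S_\lambda$ away from $\Lambda$ and inclusion into $S_{\lambda^*}$ at points of $\Lambda$ (\cref{lem:routing:2}, with $|\Lambda|<\infty$), and convexity of $S_{\lambda^*}$ plus linearity of the expected cost (\cref{lem:routing:0}) to interpolate costs at a breakpoint. The only difference is organizational: the paper pinpoints a single crossing value $\lambda^*$ where the cost passes $B$ and mixes the two strategies from either side inside $S_{\lambda^*}$, whereas you enumerate all breakpoints and glue the piecewise-constant cost ranges into one interval — essentially the same discrete intermediate-value argument.
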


\begin{proof}
    
    Let $B \in [B_1, B_2]$ be arbitrary. If $B = B_1$ or $B = B_2$, the statement is trivially true. Therefore, suppose $B \in (B_1, B_2)$. Let $\Lambda$ be as defined in \cref{lem:routing:2}. By \cref{lem:routing:1}, there exists a $\lambda^* \in [\lambda_1, \lambda_2]$ such that all strategies in $S_\lambda$ for $\lambda < \lambda^*$, resp. $\lambda > \lambda^*$, have cost at least, resp. at most, $B$. If $\lambda^* \notin \Lambda$, then the first part of \cref{lem:routing:2}, together with $|\Lambda| < \infty$, implies that $S_{\lambda^*} = S_{\lambda^*-\epsilon} = S_{\lambda^*+\epsilon}$ for some $\epsilon > 0$. All the strategies in $S_{\lambda^*}$ must therefore have cost both at least and at most $B$, meaning they should equal $B$. We can therefore assume that $\lambda^* \in \Lambda$. By \cref{lem:routing:2} and $|\Lambda| < \infty$, there is en $\epsilon > 0$ such that $S_{\lambda^*-\epsilon} \subset S_{\lambda^*}$ and $S_{\lambda^*+\epsilon} \subset S_{\lambda^*}$. Let $s^- \in S_{\lambda^*-\epsilon}$ and $s^+ \in S_{\lambda^*+\epsilon}$ be arbitrary. Let $s^\gamma$ be the convex combination of $s^-$ and $s^+$ with weight $\gamma \in [0, 1]$. Since $s^-, s^+ \in S_{\lambda^*}$, we have $s^\gamma \in S_{\lambda^*}$ by \cref{lem:routing:0}. Denote by $B^-$, resp. $B^+$, the cost of $s^-$, resp $s^+$. Furthermore, the cost of $s^\gamma$ is $\gamma B^- + (1-\gamma)B^+$. Since $B \in [B^-, B^+]$, there exists a $\gamma \in [0, 1]$ such that $s^\gamma$ has cost exactly $B$.
\end{proof}

We can now prove the theorem.

\begin{proof}
If $S_0$ contains a solution that has cost less than or equal to $B$, then this solution trivially achieves the optimal quality. Thus, for the rest of the proof we can assume that the cost of every solution in $S_0$ is greater than $B$. For $\lambda \rightarrow \infty$, $S_\lambda$ contains the solution that assigns all probability mass to the model with the lowest cost. Since there is an admissible solution, this solution necessarily has cost less than $B$. Therefore, by \cref{lem:routing:3}, there exists a $\lambda^* \in \mathbb{R}$ such that $S_{\lambda^*}$ contains a routing strategy that has exactly cost $B$. 

Let $s$ be an arbitrary routing strategy in $\bigcup_{\lambda \in \mathbb{R}^+}S_\lambda$ that has cost $B$. Specifically, let $s \in S_\lambda$. Let $s'$ be any other routing strategy that is an admissible solution to the optimization problem. Then:

\begin{align*}
    \mathbb{E}_{x \in X} \left[\sum_{i=1}^k s_i'(x) \hat{q}_i(x)\right] &= \mathbb{E}_{x \in X} \left[\sum_{i=1}^k s_i'(x) \hat{q}_i(x) - \lambda B + \lambda B\right] \\
    &\leqslant \mathbb{E}_{x \in X} \left[\sum_{i=1}^k s_i'(x) \left(\hat{q}_i(x) - \lambda \hat{c}_i(x) \right) + \lambda B\right] \\
    &\leqslant \mathbb{E}_{x \in X} \left[ \sum_{i=1}^k s_i(x) \left(\hat{q}_i(x) - \lambda \hat{c}_i(x) \right) + \lambda B\right] \\
    &= \mathbb{E}_{x \in X} \left[ \sum_{i=1}^k s_i(x) \hat{q}_i(x) \right]\\
\end{align*}
Thus, $s$ achieves the optimal quality.

\end{proof}

\section{Cascading}\label{app:cascading}

To prove \cref{thm:cascading:main}, we heavily rely on the results derived in \cref{app:routing}. As explained in \cref{sec:cascading}, cascading can be reinterpreted as a sequence of routing problems. However, to prove optimality, we need to be slightly more careful with the exact formulation of the problem. 

At step $j$, the cascading strategy needs to decide whether to stop the cascade or to continue to the next model. It should continue to the next model if any of the supermodels $M_{1:j}, \dots, M_{1:k}$ is better to run than $M_{1:j-1}$ for some measure of 'better'. Therefore, the cascading strategy is indeed performing a routing operation between the supermodels $M_{1:j-1}, \dots, M_{1:k}$.

However, the optimization problem does slightly change compared to the routing problem. First of all, for each query $x \in \mathcal{X}$, there is a possibility that the cascade is stopped before step $j$. Therefore, the cascade should not aim to optimize the quality at step $j$ for such a query, since it would not have any effect on the overall quality of the cascade. Furthermore, the budget $B$ is only enforced over the entire cascade, and not over the individual steps. Since the problem changes through steps, it is not required that the cost of the router at step $j$ is exactly equal to $B$. 

Therefore, we reformulate cascading using an inner and outer optimization problem. The inner optimization problem aims to find the optimal routing strategy at step $j$ for a given budget $B_j$. The outer optimization problem aims to find the optimal budget $B_j$ for each step $j$ such that the overall quality of the cascade is maximized under the constraint that the total cost of the cascade is at most $B$.

To formulate this more exactly, let $P_j(M)$ be the probability that the cascade computed supermodel $M$ by step $j$. Then, the inner optimization problem at step $j$ can be formulated as:

\begin{equation}\label{eq:cascading:inner}
    \begin{aligned}
        \max_{r^{(j)}} \quad & \mathbb{E}_{x \sim \mathcal{X}} \left[ P_j(M_{1:j-1}) \sum_{i=j-1}^k r_{1:i}(x) \hat{q}^{(j)}_{1:i}(x) \right]\\
        \text{s.t.} \quad & \mathbb{E}_{x \sim \mathcal{X}} \left[ P_j(M_{1:j-1}) \sum_{i=j-1}^k r_{1:i}(x) \hat{c}^{(j)}_{1:i}(x) \right] \leqslant B_j \\
        & \forall i \in \{j-1, ..., k\}: \forall x \in \mathcal{X}: r_{1:i}(x) \geq 0 \land \sum_{i=j-1}^k r_{1:i}(x) = 1
    \end{aligned}
\end{equation}
Note that $P_j(M_{1:j-1})$ can be incorporated in the quality and cost estimates. This leaves us with the exact same optimization problem as the routing problem, but with a different budget $B_j$. Since the chosen model only depends on the maximization of $P_j(M_{1:j-1})\hat{q}^{(j)}_i(x) - \lambda_j P_j(M_{1:j-1})\hat{c}^{(j)}_i(x)$, the probability $P_j(M_{1:j-1})$ can be divided out of the optimization problem.

The inner optimization problems prove the existence of optimal routing strategies at each step $j$ with parameters $\lambda_j$. We note that there only needs to be one parameter $\gamma$ that determines the convex combination since the budget $B$ is only enforced over the entire cascade.

Let us denote the quality and cost of the entire cascading strategy for given parameters $\lambda_1, \dots, \lambda_k$ and $\gamma$ as $Q(\lambda_1, \dots, \lambda_k, \gamma)$ and $C(\lambda_1, \dots, \lambda_k, \gamma)$ respectively. Then, the outer optimization problem can be formulated as:

\begin{equation}\label{eq:cascading:outer}
    \begin{aligned}
        \max_{\lambda_1, \dots, \lambda_k, \gamma} \quad & Q(\lambda_1, \dots, \lambda_k, \gamma) \\
        \text{s.t.} \quad & C(\lambda_1, \dots, \lambda_k, \gamma) \leqslant B
    \end{aligned}
\end{equation}

To solve this outer optimization problem, we simply perform a hyperparameter search over the budgets $B_1, \dots, B_k$ using a hyperparameter optimization search as discussed in \cref{sec:cascading}.

\subsection{Prior Approximations}

We now prove \cref{cor:cascading:threshold}. Before doing so, we first need to define what we exactly mean by equivalency. For this purpose, let $\mathcal{C}_1$ be defined as follows:
\begin{equation*}
    \mathcal{C}_1 = \left\{ s \mid s \text{ is a cascading strategy with parameters } \lambda_1, \dots, \lambda_k, \gamma = 0 \text{ using estimates } \hat{q}^{(j)}, \hat{c}^{(j)}\right\}
\end{equation*}
Similarly, let $\mathcal{C}_2$ be defined as follows:
\begin{equation*}
    \mathcal{C}_2 = \left\{ s \mid s \text{ is a thresholding strategy with parameters } \tau_1, \dots, \tau_k \text{ using estimates } \hat{q}^{(j)}, \hat{c}^{(j)} \right\}
\end{equation*}
We note that we set $\gamma = 0$ since the thresholding strategy is deterministic. We therefore restrict the cascading strategy to be deterministic as well.

We define the equivalence between the two sets as follows:

\begin{definition}[Equivalence of Strategies]\label{def:equivalence}
    We say a set of strategies $\mathcal{C}_1$ is equivalent to another set of strategies $\mathcal{C}_2$, denoted as $\mathcal{C}_1 \equiv \mathcal{C}_2$, if for all $s_0 \in \mathcal{C}_1 \cup \mathcal{C}_2$ there exists a $s_1 \in \mathcal{C}_1$, and a $s_2 \in \mathcal{C}_2$ such that for all $x \in \mathcal{X}$, $s_0$, $s_1$ and $s_2$ take the same decisions on $x$.
\end{definition}

We can now more accurately state the conditions under which the thresholding strategy is equivalent to the optimal strategy.

\begin{corollary}[Optimal Thresholding Strategy]\label{cor:cascading:threshold:app}
    Let $\mathcal{C}_1$, $\mathcal{C}_2$ be defined as above. Then, $\mathcal{C}_1 \equiv \mathcal{C}_2$ if and only if there exists alternative quality and cost estimates $\hat{q}_i^{(j)'}(x)$ and $\hat{c}_i^{(j)'}(x)$ with associated set of cascading strategies $\mathcal{C}_1'$ such that $\mathcal{C}_1 \equiv \mathcal{C}_1'$ and the following conditions hold on these alternative quality and cost estimates: $\hat{c}^{(j)'}_i(x)$ is independent of $x$ and bigger than $0$, $\hat{q}^{(j)'}_{i}(x)$ is independent of $x$ for all $i \geqslant j$, and $\hat{q}^{(j)'}_{1:i}(x)$ is equal to $\hat{q}^{(j)'}_{i}(x)$.
\end{corollary}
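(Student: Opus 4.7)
The plan is to handle both directions of the iff by first working out what the optimal cascading strategy (from \cref{thm:cascading:main} restricted to $\gamma = 0$) does at step $j$ under the three conditions on the alternative estimates, and then passing between $\mathcal{C}_1$, $\mathcal{C}_1'$, and $\mathcal{C}_2$ using the hypothesis $\mathcal{C}_1 \equiv \mathcal{C}_1'$ together with transitivity of the equivalence in \cref{def:equivalence}.

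The core calculation, which I would run first and invoke in both directions, unfolds the tradeoff $\tau_{1:i}(x,\lambda_j) = \hat{q}^{(j)'}_{1:i}(x) - \lambda_j \hat{c}^{(j)'}_{1:i}(x)$ for each supermodel $M_{1:i}$ with $i \in \{j-1,\dots,k\}$. Condition 3 replaces $\hat{q}^{(j)'}_{1:i}(x)$ by $\hat{q}^{(j)'}_i(x)$; condition 2 then makes this constant in $x$ for all $i \geq j$; and condition 1 makes the remaining continuation cost $\sum_{\ell=j}^{i}\hat{c}^{(j)'}_\ell$ a positive constant as well. For $i = j-1$ the continuation cost is zero, so the only $x$-dependent tradeoff is $\tau_{1:j-1}(x,\lambda_j) = \hat{q}^{(j)'}_{j-1}(x)$. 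Setting $\tau_j^{\mathrm{alt}}(\lambda_j) := \max_{i \geq j}\!\left(\hat{q}^{(j)'}_i - \lambda_j\sum_{\ell=j}^{i}\hat{c}^{(j)'}_\ell\right)$, the argmax criterion from \cref{thm:cascading:main} collapses to a thresholding rule: the cascade continues at step $j$ iff $\hat{q}^{(j)'}_{j-1}(x) < \tau_j^{\mathrm{alt}}(\lambda_j)$.

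For the backward direction ($\Leftarrow$), this identifies $\mathcal{C}_1'$ with the family of thresholding strategies on $\hat{q}^{(j)'}_{j-1}$ (the ``minor technical assumption'' guarantees $\tau_j^{\mathrm{alt}}(\lambda_j)$ sweeps the relevant range of $\hat{q}^{(j)'}_{j-1}$ as $\lambda_j$ varies over $\mathbb{R}^+$). By $\mathcal{C}_1 \equiv \mathcal{C}_1'$, the same decision family is realized by cascading on the original estimates, and I would complete the direction by matching thresholds on $\hat{q}^{(j)'}_{j-1}$ to thresholds on $\hat{q}^{(j)}_{j-1}$: as $\lambda_j$ is swept, the stop/continue regions produced by the original cascading rule form a totally ordered chain that must coincide, set by set, with the sub-level sets of $\hat{q}^{(j)'}_{j-1}$, yielding the required monotone correspondence and hence $\mathcal{C}_1 \equiv \mathcal{C}_2$. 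For the forward direction ($\Rightarrow$), I would just give a direct construction: take $\hat{q}^{(j)'}_{j-1}(x) := \hat{q}^{(j)}_{j-1}(x)$, pick arbitrary positive constants $\hat{c}^{(j)'}_\ell > 0$, choose constants $\hat{q}^{(j)'}_i$ for $i \geq j$ large enough that $\tau_j^{\mathrm{alt}}(\lambda_j)$ covers the range of $\hat{q}^{(j)}_{j-1}$, and set $\hat{q}^{(j)'}_{1:i}(x) := \hat{q}^{(j)'}_i(x)$. Conditions 1--3 hold by construction; the core calculation then gives $\mathcal{C}_1' \equiv \mathcal{C}_2$; combining with the hypothesis $\mathcal{C}_1 \equiv \mathcal{C}_2$ and transitivity of $\equiv$ (straightforward from \cref{def:equivalence}) yields $\mathcal{C}_1 \equiv \mathcal{C}_1'$.

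The hardest step I expect is the final matching between thresholds on $\hat{q}^{(j)'}_{j-1}$ and thresholds on $\hat{q}^{(j)}_{j-1}$ in the backward direction: the cascading decision on the original estimates can a priori depend on many features of $x$ through $\hat{q}^{(j)}_i(x)$, $\hat{c}^{(j)}_i(x)$ and $\hat{q}^{(j)}_{1:i}(x)$, so to argue that it collapses to a one-parameter thresholding family on the single scalar $\hat{q}^{(j)}_{j-1}$ requires showing that when a cascading family coincides with a totally ordered chain of sub-level sets of another function, its own decisions must also factor through a monotone function of $\hat{q}^{(j)}_{j-1}$. All other steps (the core calculation, the forward construction, and transitivity) should be routine.
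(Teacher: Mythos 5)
Your core calculation and your forward direction are, in substance, the paper's own proof: under the three conditions the step-$j$ continuation rule collapses to ``continue iff $\hat{q}^{(j)'}_{j-1}(x)$ lies below a constant determined by $\lambda_j$,'' and for the converse the paper constructs essentially the estimates you describe (unit costs, constant qualities for $i \geqslant j$, $\hat{q}^{(j)'}_{i}=\hat{q}^{(j)}_{i}$ for $i<j$, supermodel quality equal to last-model quality) and concludes by transitivity of the equivalence in \cref{def:equivalence}.

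The genuine gap is in your backward direction, at exactly the step you flag as hardest. The paper never attempts a ``matching of thresholds on $\hat{q}^{(j)'}_{j-1}$ to thresholds on $\hat{q}^{(j)}_{j-1}$'': it matches $\mathcal{C}_1'$ and $\mathcal{C}_2$ directly, sending a cascading strategy with parameter $\lambda_j$ to the threshold $\tau_j=\max_{i\geqslant j}\bigl(\hat{q}^{(j)'}_{1:i}-\lambda_j\hat{c}^{(j)'}_{j:i}\bigr)$ and, conversely, solving for $\lambda_j$ from a given $\tau_j$ (this is where positivity of the constant cost is used); the collapsed rule is itself read as the thresholding strategy. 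Your plan instead requires that the continue-regions, which by the collapse are sub-level sets of $\hat{q}^{(j)'}_{j-1}$, also be sub-level sets of the original $\hat{q}^{(j)}_{j-1}$, and nothing in the hypotheses delivers that: $\mathcal{C}_1\equiv\mathcal{C}_1'$ constrains only which decision regions occur, not the functional form of the primed estimate. Concretely, primed estimates with $\hat{q}^{(j)'}_{j-1}=-\hat{q}^{(j)}_{j-1}$ and constant costs and qualities satisfy every condition in \cref{cor:cascading:threshold:app}, yet yield continue-regions that are super-level sets of the original estimate, so your monotone-correspondence step is not merely unproven but unprovable without an additional assumption tying $\hat{q}^{(j)'}_{j-1}$ to $\hat{q}^{(j)}_{j-1}$ (this is part of the ``minor technical assumptions'' slack in the main-text version; the paper's proof sidesteps it by thresholding the same scalar that appears in the collapsed rule, consistent with its forward construction keeping $\hat{q}^{(j)'}_{j-1}=\hat{q}^{(j)}_{j-1}$). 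To align with the paper, drop the re-expression in terms of $\hat{q}^{(j)}_{j-1}$ and perform the direct $\lambda_j\leftrightarrow\tau_j$ matching between $\mathcal{C}_1'$ and $\mathcal{C}_2$.
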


The main difference between \cref{cor:cascading:threshold:app} and \cref{cor:cascading:threshold} is that we impose the possibility of alternative quality and cost estimates. However, this does not really influence equivalency in the intuitive sense. Indeed, one could alternatively phrase the corollary as follows: the thresholding strategy is equivalent to any of our cascading strategies if and only if it is possible to construct alternative estimates such that the conditions hold.

\begin{proof}
    We note that the cascade $s \in \mathcal{C}_1$ continues on a sample if the following condition holds:

    \begin{equation}\label{eq:cascading:threshold}
        \hat{q}_{1:j-1}^{(j)}(x) - \lambda_j \hat{c}_{1:j-1}^{(j)}(x) < \max_{i \in \{j, ..., k\}} \hat{q}_{1:i}^{(j)}(x) - \lambda_j \hat{c}_{1:i}^{(j)}(x)
    \end{equation}

    If $\mathcal{C}_1 \equiv \mathcal{C}_1'$, it is clear that \cref{eq:cascading:threshold} reduces to the thresholding strategy for all strategies in $\mathcal{C}_1'$. Indeed, for any $s \in \mathcal{C}_1'$, set $\tau_j = \max_{i \in \{j, ..., k\}} \hat{q}_{1:i}^{(j)} - \lambda_j \hat{c}_{j:i}^{(j)}$ and the thresholding strategy is equivalent to $s$. Alternatively, if $s \in \mathcal{C}_2$, suppose  $\max_{i \in \{j, ..., k\}} \hat{q}_{1:i}^{(j)} - \lambda_j \hat{c}_{j:i}^{(j)} = \hat{q}_{1:i}^{(j)} - \lambda_j \hat{c}_{j:i}^{(j)}$ for some index $i$. Then, set $\lambda_j = \tau_j / \hat{c}_{j:i}^{(j)} -  \hat{q}_{1:i}^{(j)} / \hat{c}_{j:i}^{(j)}$ and the cascading strategy is equivalent to $s$. Therefore, $\mathcal{C}_1 \equiv \mathcal{C}_1' \equiv \mathcal{C}_2$.

    Suppose now that $\mathcal{C}_1 \equiv \mathcal{C}_2$. We construct alternative quality and cost estimates $\hat{q}_i^{(j)'}(x)$ and $\hat{c}_i^{(j)'}(x)$ such that the conditions hold and such that $\mathcal{C}_1 \equiv \mathcal{C}_1'$. For this purpose, we define $\hat{c}_i^{(j)'}(x) = 1$  for all $i, j \in \{1, \dots, k\}$, $\hat{q}_i^{(j)'}(x) = 1$ for all $i \geqslant j$, and $\hat{q}_i^{(j)'}(x) = \hat{q}_i^{(j)}(x)$ otherwise. Furthermore, we set $\hat{q}_{1:i}^{(j)'}(x) = \hat{q}_{i}^{(j)'}(x)$ for all $i, j \in \{1, \dots, k\}$. The equivalence of $\mathcal{C}_1'$ and $\mathcal{C}_2$ can now be proven analogously to the previous paragraph. Therefore, $\mathcal{C}_1 \equiv \mathcal{C}_1' \equiv \mathcal{C}_2$.
\end{proof}
\section{Cascade Routing}\label{app:cascaderouting}

We first note that the proof of the optimality of the cascade routing strategy is equivalent to the proof of the optimality of the cascade strategy, except that the expectation in the optimization problem \cref{eq:cascading:inner} is now not only over $x \in X$, but also over all possible supermodels that were computed by step $j-1$. However, this does not change the optimization problem, and the proof is completely analogous to the proof given in \cref{sec:cascading}. Thus, all we need to prove is \cref{lem:neg_marginal_gain:main}. To prove the lemma, we first prove the following lemma.

\begin{lemma}\label{lem:submodular}
Let $Q_1, ..., Q_k$ be distributions. Let $\mathcal{S}$ be the superset of $\{1, ..., k\}$. Then $f : \mathcal{S} \rightarrow \mathbb{R}$ defined as $f(S) = \mathbb{E}(\max_{i \in S} Q_i)$ is submodular. Here, we define $\max_{i \in \emptyset} Q_i = - \infty$
\end{lemma}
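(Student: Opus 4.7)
The plan is to prove pointwise submodularity for each realization of $(Q_1, \ldots, Q_k)$ and then lift the result to $f$ by taking expectation. Recall that submodularity of $f$ is the diminishing-returns inequality: for every $A \subseteq B \subseteq \{1,\ldots,k\}$ and every $j \in \{1,\ldots,k\} \setminus B$,
$$f(A \cup \{j\}) - f(A) \geq f(B \cup \{j\}) - f(B).$$

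First I would fix a realization $q = (q_1, \ldots, q_k)$ and define the deterministic set function $g_q(S) = \max_{i \in S} q_i$, with the convention $g_q(\emptyset) = -\infty$. The key identity is
$$g_q(S \cup \{j\}) - g_q(S) = \max\bigl\{q_j - g_q(S),\, 0\bigr\}$$
for any $S$ and any $j \notin S$, which holds because adjoining $j$ raises the maximum exactly by the positive part of $q_j - g_q(S)$. Since $A \subseteq B$ gives $g_q(A) \leq g_q(B)$, we have $q_j - g_q(A) \geq q_j - g_q(B)$, and monotonicity of $t \mapsto \max(t, 0)$ yields the pointwise diminishing-returns inequality $g_q(A \cup \{j\}) - g_q(A) \geq g_q(B \cup \{j\}) - g_q(B)$.

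Finally I would take expectations of both sides over the joint distribution of $(Q_1, \ldots, Q_k)$. Since the pointwise inequality holds for every realization, monotonicity of expectation transfers it to $f$, completing the proof. The only mildly delicate point, which is not a real obstacle, is the corner case $A = \emptyset$: there $g_q(\emptyset) = -\infty$ forces the left-hand marginal to be $+\infty$, which trivially dominates the right-hand side, so the inequality holds vacuously in that case. Overall the argument is essentially a one-line pointwise computation combined with a standard expectation step, so I expect no substantive difficulty.
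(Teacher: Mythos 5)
Your proof is correct and follows essentially the same route as the paper: the identity $\max_{i\in S\cup\{j\}}Q_i-\max_{i\in S}Q_i=\max(0,\,Q_j-\max_{i\in S}Q_i)$, monotonicity of $t\mapsto\max(t,0)$ under set inclusion, and taking expectations (the paper just carries the argument inside the expectation directly, and likewise invokes the $-\infty$ convention for the empty-set case). No substantive differences.
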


\begin{proof}
 Let $T \subset S \subset \{1, \dots, k\}$ and $j \in \{1, \dots, k\}$ be arbitrary. To show the submodularity of $f$, we need to show that
\begin{equation*}
    f(T \cup \{j\}) - f(T) \geq f(S \cup \{j\}) - f(S).
\end{equation*}
 We can write:
\begin{align*}
    f(S \cup \{j\}) - f(S) &= \mathbb{E}(\max_{i \in S \cup \{j\}} Q_i) - \mathbb{E}(\max_{i \in S} Q_i) \\
    &= \mathbb{E}(\max(0, Q_j - \max_{i \in S} Q_i)) \\
    &\leqslant \mathbb{E}(\max(0, Q_j - \max_{i \in T} Q_i)) \\
    &= \mathbb{E}(\max_{i \in T \cup \{j\}} Q_i) - \mathbb{E}(\max_{i \in T} Q_i) \\
    &= f(T \cup \{j\}) - f(T).
\end{align*}
In the proof, we needed $\max_{i \in \emptyset} Q_i = -\infty$ in the case $T = \emptyset$.
\end{proof}
We note that the assertion that $\max_{i \in \emptyset} Q_i = -\infty$ corresponds to the fact that giving no answer to a query has $-\infty$ quality.

We can now prove \cref{lem:neg_marginal_gain:main}.
\begin{proof}
    Let $M$ and $m$ be as in the lemma. Suppose $M'$ is a supermodel that contains all models in $M$. Furthermore, let $M'' = M' \setminus m$. We show that the supermodel $M''$ is always strictly preferred over $M'$. To see this, we note that the difference between $\tau_{M'}(x, \lambda)$ and $\tau_{M''}(x, \lambda)$ is equal to
    \begin{equation*}
        \mathbb{E}(\max_{m' \in M'}\hat{q}_{m'}(x)) - \mathbb{E}(\max_{m' \in M''}\hat{q}_{m'}(x)) - \lambda_j \hat{c}_{m}(x)
    \end{equation*}
    By \cref{lem:submodular}, this difference is smaller than $\hat{q}_{M}(x) - \hat{q}_{M \setminus \{m\}}(x) - \lambda_j \hat{c}_{m}(x)$. Thus, by assumption, this difference is negative, and therefore $M''$ is always preferred over $M'$, which concludes the proof.
\end{proof}

\section{Experimental Details}\label{app:experimental-details}

We describe some additional details about the experimental setup and the datasets used in our experiments.

\subsection{Routerbench}\label{app:experimental-details:routerbench}

\begin{table}
    \centering
    \footnotesize
    \caption{Standard deviations of the noise levels on the RouterBench dataset.}

    \begin{tabular}{
        l
        c
        c
        c
        c
    }
    \toprule
    & \multicolumn{2}{c}{Quality} &  \multicolumn{2}{c}{Cost}\\
    \cmidrule(lr){2-3} \cmidrule(lr){4-5}
    & {$\sigma_{\text{before}}$} & {$\sigma_{\text{after}}$} & {$\sigma_{\text{before}}$} & {$\sigma_{\text{after}}$} \\
    \midrule
    \textsc{Low} & 0.6 & 0.3 & 0.0002 & 0.00005 \\
    \textsc{Medium} & 1.6 & 0.8 & 0.0004 & 0.0001 \\
    \textsc{High} & 2.4 & 1.2 & 100 & 100 \\
    \bottomrule
    \end{tabular}
    \label{tab:routerbench-noise}
\end{table}

\paragraph{Data Split} We use $5\%$ of the RouterBench data (around $2000$ samples) to optimize the hyperparameters of cascading, routing, and cascade routing. The remaining $95\%$ is used for evaluation. We use the same data split for all noise levels.

\paragraph{Noise} In \cref{tab:routerbench-noise} we specify the standard deviations of the noise levels on the RouterBench dataset. To put these numbers into context, we note that quality varies between $0$ and $1$, and the average cost of the smallest models is $0.000073$, while the average cost of the largest models is $0.003281$. We fit a logistic regression model on this noisy signal to obtain the quality and cost estimates. This simulates the noise in the features that are used to estimate the quality and cost of the models.

\paragraph{Models} In the evaluated scenarios for three models, we use the models \textsc{Mixtral-8x7b-Chat}, \textsc{GPT-3.5-Turbo-1106}, and \textsc{GPT-4-1106-Preview}. When using five models, we add \textsc{WizardLM-13B-V1.2} and \textsc{Claude-v2} to the mix. For eleven models, we use all models available in the benchmark.

\subsection{Accurate Quality Estimation} \label{app:experimental-details:real_world}

\paragraph{Data Split} For the SWE-Bench benchmark, we use its verified data split and divide the dataset into training and calibration subsets, with each comprising $50\%$ of the data. For the Minerva Math and LiveCodeBench benchmark, we only include the Algebra portion of Minerva Math to ensure that both benchmarks have a comparable number of samples for evaluation. Similarly, we also perform a $50\%$ split of this dataset into training and calibration sets.

\paragraph{Evaluation Setting} For the SWE-Bench evaluation, we analyze the performance of 10 models submitted to the benchmark's leaderboard. The logs for these models were obtained from the official SWE-Bench repository\footnote{\url{https://github.com/swe-bench/experiments}}. Specifically, we evaluated the following models:
\vspace{-3mm}
\begin{itemize}\setlength\itemsep{0.02em}
    \item \texttt{20240402\_sweagent\_claude3opus}
    \item \texttt{20241007\_nfactorial}
    \item \texttt{20240728\_sweagent\_gpt4o}
    \item \texttt{20240620\_sweagent\_claude3.5sonnet}
    \item \texttt{20241016\_epam-ai-run-gpt-4o}
    \item \texttt{20240824\_gru}
    \item \texttt{20241106\_navie-2-gpt4o-sonnet}
    \item \texttt{20240820\_epam-ai-run-gpt-4o}
    \item \texttt{20241202\_agentless-1.5\_claude-3.5-sonnet-20241022}
    \item \texttt{20241028\_agentless-1.5\_gpt4o}
\end{itemize}
For each model, we extract the time required to complete a task to measure cost.
\newpage
For LiveCodeBench and Minerva Math, we evaluate the following models:
\begin{itemize}\setlength\itemsep{0.02em}
    \item \textsc{Qwen-2.5-Coder-7B-Instruct}
    \item \textsc{Qwen-2.5-Coder-1.5B-Instruct}
    \item \textsc{Qwen-2.5-Math-7B-Instruct}
    \item \textsc{Qwen-2.5-Math-1.5B-Instruct}
\end{itemize}
We conduct experiments using version 5 of the LiveCodeBench benchmark from its official repository. For Minerva Math, we utilize the LM Evaluation Harness \citep{eval-harness} to ensure consistent and reliable evaluation.

\paragraph{Cost Estimation} For SWE-Bench, the cost is defined as the time (in seconds) that a model takes to complete a task. A linear regression model is fitted to predict this cost based on the query length and, when available, the cost of running other models.

For LiveCodeBench and Minerva Math, the cost is calculated as the total number of tokens in both the query and the answer, multiplied by the size of the model (in billions of parameters). Similar to SWE-Bench, a linear model is used to predict the cost based on query length and other models' costs.

\paragraph{Quality Estimation}  For ex-ante quality estimation in SWE-Bench, we train a logistic regression model that predicts quality based on the query length and a one-hot encoded variable representing the query’s source repository. Post-hoc quality estimation leverages the ground-truth quality scores computed during evaluation.

For ex-ante quality estimation in Minerva Math and LiveCodeBench, we include the query length, query source (Minerva Math or LiveCodeBench), and the difficulty level of the problem as defined by the benchmark. Post-hoc quality estimation incorporates additional information, such as whether the parsed answers from different models agree with one another.

\subsection{Poor Quality Estimation}

\paragraph{Data Split} We split each dataset in each benchmark into a training set and a test set, each comprising $50\%$ of the data. For all datasets except GSM8k, the training set is created by splitting the original test data. In the case of GSM8k, since a separate training set is already available, we use this pre-existing training data, leaving the original test set unchanged. The training set is then further divided, with $50\%$ used for training quality and cost estimators, and the remaining $50\%$ reserved for hyperparameter optimization through validation.

\paragraph{Evaluation Setting} We use completion-based evaluation in a one-shot setting for each benchmark. For the classification tasks, we obtain the probability associated with each class ("A", "B", "C", \dots) from the model directly. For open-form reasoning tasks, we extract the answer by instruction the model to generate a completion that ends with an extractable answer. If the model does not output an answer in the correct format, we perform a best-effort extraction by trying various regex patterns. Details on the prompts and regex patterns used for each benchmark are provided in the code repository.

\paragraph{Models} For the \textsc{Llama-3.1} model family, we use the models \textsc{Llama-3.1-8B-Instruct}, \textsc{Llama-3.1-70B-Instruct}, and \textsc{Llama-3.1-405B-Instruct}. For the \textsc{Gemma} model family, we use the models \textsc{Gemma-2B-Instruct}, \textsc{Gemma-2-9B-Instruct}, and  \textsc{Gemma-2-27B-Instruct}. For the \textsc{Mistral} model family, we use the models \textsc{Mistral-7B-Instruct-v0.3}, \textsc{Mixtral-8x7B-Instruct-v0.1}, and \textsc{Mixtral-8x22B-Instruct-v0.1}.

\paragraph{Cost Estimation} For cost estimation, we first calculate the number of tokens in both the query and the model's response. We then use API-based prices per token for each model to estimate the cost.\footnote{We used the \href{https://www.together.ai/}{Together API} for all our experiments.} In classification, where responses consist of a single token, the cost can be determined before running the model. In open-form reasoning tasks, where response lengths vary, we estimate this length based on responses from previous models in the cascade if the model has not yet been computed. If no model response is available, we estimate the response length using the average from the training data.

\paragraph{Features Quality Estimates} We specify the exact features used for the logistic regression model that serves as the quality estimator in \cref{sec:real_world}. First, we include a one-hot encoding of the various datasets in each benchmark. Furthermore, for classification, we include the probability associated with the highest class and the entropy of the class probabilities if the model has been computed. If several models have been computed, we include both whether they agree on their prediction, and the JS-divergence between their class probabilities. For open-form reasoning, we include the perplexity, number of tokens, and several quantiles of the logits if the model has been computed, in accordance with \citet{quality_cascadegupta2024}. If several models have been computed, we also include whether they agree on their prediction.

We note that we train a separate logistic regression model for each history of computed models, and for each model separately as well. Thus we have one linear model for each combination of a target model $m_i$ and computed models $m_{i_1}, \dots, m_{i_j}$. All the linear models are trained on the training set included in the benchmark.
\section{Confidence Intervals}\label{app:appendix-confidence}
\vspace{-1mm}

To check whether the results obtained by cascade routing are significantly higher than our baselines in \cref{tab:routerbench,tab:real,tab:routerbench:appendix}, we perform bootstrapping on the samples in the dataset. Specifically, we compute the confidence interval associated with the difference between the AUC scores of cascade routing and the baselines. If this difference is positive and its $2\sigma$ confidence interval does not contain zero, we can conclude that cascade routing is significantly better than the baseline. These confidence intervals are reported in \cref{tab:routerbench:confidence,tab:routerbench_2:confidence,tab:real:confidence}.

\begin{table}
    \centering
    \footnotesize
    \caption{AUC scores in $\%$ for different strategies on RouterBench in the $0$-shot setting with $2\sigma$ confidence intervals.}
    \vspace{-2mm}
    \renewcommand{\arraystretch}{1.2}
    \resizebox{\linewidth}{!}{
    \begin{tabular}{
        l
        c
        c
        c
        c
        c
        c
        c
        c
        c
    }
    \toprule
    & \multicolumn{3}{c}{Three Models} & \multicolumn{3}{c}{Five Models} & \multicolumn{3}{c}{Eleven Models} \\
    \cmidrule(lr){2-4} \cmidrule(lr){5-7} \cmidrule(lr){8-10}
    & {Low} & {Med} & {High} & {Low} & {Med} & {High} & {Low} & {Med} & {High} \\
    \midrule
    Cascade Routing (Ours) & $82.37^{+0.31}_{-0.32}$ & $76.57^{+0.34}_{-0.35}$ & $73.23^{+0.37}_{-0.38}$ & $84.33^{+0.29}_{-0.29}$ & $76.32^{+0.34}_{-0.37}$ & $72.75^{+0.36}_{-0.40}$ & $87.24^{+0.23}_{-0.26}$ & $77.58^{+0.30}_{-0.33}$ & $74.41^{+0.33}_{-0.36}$ \\
    \midrule 
    $-$ Routing & $2.64^{+0.15}_{-0.16}$ & $1.59^{+0.13}_{-0.15}$ & $1.40^{+0.15}_{-0.17}$ & $3.10^{+0.17}_{-0.15}$ & $1.88^{+0.17}_{-0.16}$ & $1.41^{+0.17}_{-0.17}$ & $4.00^{+0.17}_{-0.21}$ & $2.94^{+0.20}_{-0.21}$ & $1.73^{+0.19}_{-0.19}$ \\
    $-$ Cascade (Baseline) & $1.50^{+0.12}_{-0.12}$ & $1.91^{+0.18}_{-0.19}$ & $0.74^{+0.19}_{-0.18}$ & $2.00^{+0.17}_{-0.15}$ & $3.29^{+0.26}_{-0.27}$ & $3.22^{+0.24}_{-0.24}$ & $2.76^{+0.14}_{-0.14}$ & $3.92^{+0.28}_{-0.28}$ & $4.61^{+0.28}_{-0.28}$ \\
    $-$ Cascade (Ours) & $1.28^{+0.12}_{-0.11}$ & $0.39^{+0.15}_{-0.14}$ & $0.54^{+0.18}_{-0.17}$ & $1.27^{+0.10}_{-0.11}$ & $1.14^{+0.20}_{-0.21}$ & $2.57^{+0.21}_{-0.26}$ & $2.77^{+0.13}_{-0.13}$ & $2.46^{+0.22}_{-0.24}$ & $4.14^{+0.25}_{-0.27}$ \\
    \bottomrule
    \end{tabular}
    }
    \vspace{-2mm}
    \label{tab:routerbench:confidence}
\end{table}

\begin{table}
    \centering
    \footnotesize
    \caption{AUC scores in $\%$ for different strategies on RouterBench in the $5$-shot setting across model and noise levels with $2\sigma$ confidence intervals. Bold numbers indicate that the confidence interval contains zero.}
    \vspace{-2mm}
    \renewcommand{\arraystretch}{1.2}
    \resizebox{\linewidth}{!}{
    \begin{tabular}{
        l
        c
        c
        c
        c
        c
        c
        c
        c
        c
    }
    \toprule
    & \multicolumn{3}{c}{Three Models} & \multicolumn{3}{c}{Five Models} & \multicolumn{3}{c}{Eleven Models} \\
    \cmidrule(lr){2-4} \cmidrule(lr){5-7} \cmidrule(lr){8-10}
    & {Low} & {Med} & {High} & {Low} & {Med} & {High} & {Low} & {Med} & {High} \\
    \midrule
    Cascade Routing (Ours)  $83.79^{+0.3}_{-0.33}$ & $78.85^{+0.29}_{-0.34}$ & $77.1^{+0.32}_{-0.35}$ & $85.5^{+0.26}_{-0.26}$ & $78.77^{+0.32}_{-0.33}$ & $76.74^{+0.34}_{-0.36}$ & $88.78^{+0.22}_{-0.22}$ & $80.89^{+0.28}_{-0.29}$ & $78.03^{+0.3}_{-0.31}$ \\
    \midrule 
    $-$ Routing & $2.3^{+0.14}_{-0.13}$ & $1.64^{+0.15}_{-0.15}$ & $1.1^{+0.13}_{-0.14}$ & $3.08^{+0.16}_{-0.14}$ & $1.94^{+0.16}_{-0.14}$ & $1.21^{+0.14}_{-0.15}$ & $3.43^{+0.15}_{-0.17}$ & $3.13^{+0.19}_{-0.2}$ & $1.6^{+0.17}_{-0.16}$ \\

$-$ Cascade (Baseline) &  $-0.64^{+0.11}_{-0.1}$ & $0.28^{+0.13}_{-0.14}$ & $0.22^{+0.16}_{-0.16}$ & $1.23^{+0.12}_{-0.12}$ & $2.19^{+0.21}_{-0.21}$ & $2.83^{+0.24}_{-0.24}$ & $1.64^{+0.12}_{-0.13}$ & $2.29^{+0.24}_{-0.24}$ & $3.09^{+0.27}_{-0.26}$ \\

$-$ Cascade (Ours) & $1.02^{+0.1}_{-0.09}$ & $\bf{0.09^{+0.11}_{-0.11}}$ & $\bf{0.1^{+0.14}_{-0.14}}$ & $1.25^{+0.1}_{-0.09}$ & $1.59^{+0.17}_{-0.17}$ & $2.45^{+0.21}_{-0.21}$ & $2.06^{+0.1}_{-0.1}$ & $2.22^{+0.21}_{-0.19}$ & $2.95^{+0.23}_{-0.24}$ \\

    \bottomrule
    \end{tabular}
    }
    \vspace{-2mm}
    \label{tab:routerbench_2:confidence}
\end{table}

\begin{table}[t]
    \centering
    \footnotesize
    \caption{AUC scores on the realistic benchmarks with $2\sigma$ confidence intervals. Bold numbers indicate that the confidence interval contains zero.}
    \vspace{-2mm}
    \renewcommand{\arraystretch}{1.2}
    \resizebox{\linewidth}{!}{
    \begin{tabular}{
        l
        c
        c
        c
        c
        c
        c
        c
        c
        c
    }
    \toprule
    & \multicolumn{2}{c}{SWE-Bench} & Math+Code & \multicolumn{3}{c}{Classification} & \multicolumn{3}{c}{Open-Form} \\
    \cmidrule(lr){2-3} \cmidrule(lr){4-4} \cmidrule(lr){5-7} \cmidrule(lr){8-10}
    & {\textsc{10 Models}} & {\textsc{5 Models}} & \textsc{Qwen} & {\textsc{Llama}} & {\textsc{Gemma}} & {\textsc{Mistral}} & {\textsc{Llama}} & {\textsc{Gemma}} & {\textsc{Mistral}} \\
    \midrule
    Cascade Routing (Ours)  & $54.21^{+7.49}_{-7.17}$ & $51.20^{+7.44}_{-7.22}$ & $48.51^{+2.95}_{-2.99}$ & $75.56^{+1.22}_{-1.16}$ & $64.89^{+1.36}_{-1.42}$ & $65.02^{+1.40}_{-1.24}$ & $79.95^{+1.28}_{-1.33}$ & $59.70^{+1.62}_{-1.61}$ & $58.77^{+1.42}_{-1.50}$ \\

    \midrule
    $-$ Routing & $13.65^{+4.50}_{-4.32}$ & $11.71^{+4.39}_{-4.14}$ & $1.11^{+0.81}_{-0.80}$ & $0.60^{+0.32}_{-0.34}$ & $\bf{0.39^{+0.50}_{-0.47}}$ & $\bf{0.08^{+0.12}_{-0.10}}$ & $0.56^{+0.38}_{-0.42}$ & $1.26^{+0.57}_{-0.55}$ & $\bf{0.02^{+0.05}_{-0.05}}$ \\

    $-$ Cascade (Baseline) & $15.36^{+3.90}_{-3.22}$ & $5.17^{+1.69}_{-1.63}$ & $10.77^{+1.71}_{-1.72}$ & $0.71^{+0.30}_{-0.28}$ & $10.51^{+0.62}_{-0.61}$ & $3.79^{+0.73}_{-0.77}$ & $0.65^{+0.23}_{-0.27}$ & $3.47^{+0.37}_{-0.40}$ & $10.45^{+1.15}_{-1.06}$ \\

    $-$ Cascade (Ours) & $\bf{0.83^{+1.90}_{-1.50}}$ & $\bf{0.11^{+1.05}_{-1.04}}$ & $1.82^{+0.58}_{-0.55}$ & $\bf{0.06^{+0.15}_{-0.17}}$ & $2.04^{+0.33}_{-0.31}$ & $1.67^{+0.41}_{-0.42}$ & $0.20^{+0.19}_{-0.19}$ & $2.00^{+0.25}_{-0.25}$ & $3.06^{+0.65}_{-0.63}$ \\
    \bottomrule
    \end{tabular}
    }
    \label{tab:real:confidence}
\end{table}
\vspace{-2mm}
\section{Additional Experiments}\label{app:appendix-runtime}
\vspace{-1mm}
\subsection{Ablation Study}\label{sec:ablation}
\vspace{-1mm}

We conduct an ablation study to examine the impact of various design choices in cascade routing on performance and runtime. Runtime is a critical factor because the overhead introduced by the strategy must be negligible compared to the time required for model computation. If the strategy adds significant overhead, its performance gains may be offset by the increased runtime. We also include an additional ablation that specifically targets runtime on random data in \cref{app:ablation:runtime}.

To investigate this, we repeat the experiment from \cref{sec:routerbench} when using all eleven models, testing different variations of cascade routing.  We evaluate a slower variation that omits \cref{lem:neg_marginal_gain:main}, thereby requiring more supermodels to be evaluated (\textsc{Slow}), a greedy variation that only considers supermodels of length $j+1$ at step $j$ (\textsc{Greedy}), and a version that does not compute the expected value when evaluating supermodel quality, using the quality of the best model instead (\textsc{No-Expect}).

\begin{table*}
    \centering
    \footnotesize
    \vspace{-3mm}
    \caption{AUC scores and average runtime for variations of cascade routing on RouterBench when using all eleven models.}
    \vspace{-2mm}
    \begin{tabular}{
        l
        x{2}{2}
        x{2}{2}
        x{2}{2}
        x{2}{2}
        x{2}{2}
        x{2}{2}
    }
    \toprule
    & \multicolumn{2}{c}{Low-Noise} & \multicolumn{2}{c}{Medium-Noise} & \multicolumn{2}{c}{High-Noise} \\
    \cmidrule(lr){2-3} \cmidrule(lr){4-5} \cmidrule(lr){6-7}
    & {AUC (\%)} & {Time (ms)} & {AUC (\%)} & {Time (ms)} & {AUC (\%)} & {Time (ms)} \\
    \midrule
    Cascade Routing & 87.288663 & 15.263219 & 77.606260 & 9.525337 & 74.408804 & 13.684285 \\
    \textsc{Slow} & 87.299149 & 78.883860 & 77.608258 & 87.722173 & 74.395604 & 88.757837 \\

    \textsc{Greedy} & 85.930465 & 1.387429 & 77.166754 & 1.173892 & 74.350349 & 0.885655 \\

    \textsc{No-Expect} & 85.977675 & 4.784659 & 77.111513 & 2.492010 & 74.347163 & 2.080314 \\
    
    \bottomrule
    \end{tabular}
    \vspace{-5mm}
    \label{tab:ablation}
\end{table*}

\vspace{-2mm}
\paragraph{Results} \cref{tab:ablation} presents the results. As expected, the \textsc{Slow} variation is almost an order of magnitude slower while achieving similar performance. In contrast, both \textsc{Greedy} and \textsc{No-Expect} are faster but perform worse in the low- and medium-noise scenarios by $0.5\%$ to $1.3\%$. Interestingly, there is a much smaller performance gap in the high-noise scenario. This is due to the very low variance in the quality estimates, since the linear model used for quality estimation predicts an almost constant value for each query in this scenario, making the expected value computation less important.

Furthermore, the \textsc{Greedy} and \textsc{No-Expect} variants perform very similarly, while \textsc{Greedy} is about twice as fast as \textsc{No-Expect}. This suggests that one should almost always use the normal variant of cascade routing, and only consider the \textsc{Greedy} variant if runtime is a critical concern. Neither the \textsc{Slow} nor the \textsc{No-Expect} variant is recommended, as they either perform worse or are significantly slower than the normal variant.
\vspace{-2mm}
\subsection{Runtime Analysis}\label{app:ablation:runtime}
\vspace{-1mm}
We further analyze the runtime of the four variants of cascade routing presented in \cref{sec:ablation}. Specifically, we perform experiments with random data, scaling the number of models to 80 to evaluate the runtime of all variants. Furthermore, we include a fifth variant of cascade routing in the analysis \textsc{Max-Depth}, which restricts cascade routing to a maximum depth of $3$ models. \textsc{Max-Depth} does not reduce performance of cascade routing if the optimal depth is less than or equal to $3$ models. However, it does significantly reduce the runtime of cascade routing.

For each number of models, we generate $100$ data points, each with random quality and cost estimates associated with each model. For each point, we generate the hyperparameters $\lambda_1, ..., \lambda_k$ and $\gamma$ randomly. We then report the average runtime of the five variants of cascade routing in \cref{fig:runtime}.

\begin{wrapfigure}{r}{0.35\textwidth}
    \vspace{-6mm}
    \centering
    \includegraphics[width=0.35\textwidth]{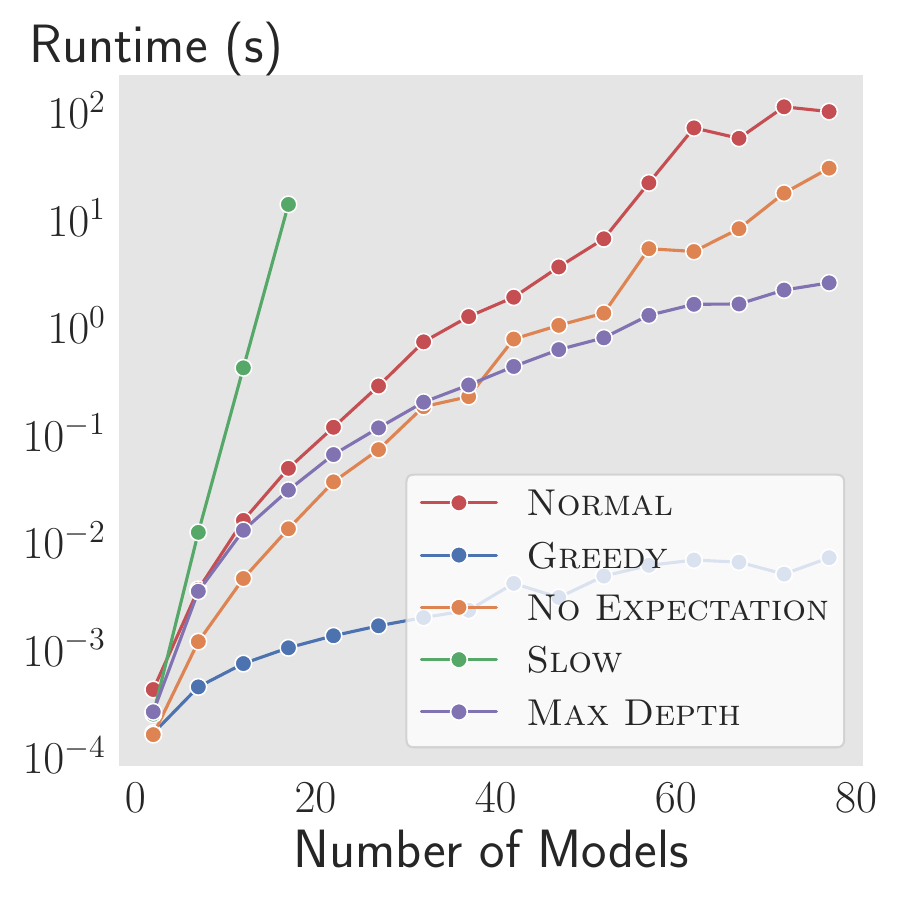}
    \vspace{-6mm}
    \caption{Runtime of cascade routing variants for different numbers of models.}
    \label{fig:runtime}
    \vspace{-8mm}
\end{wrapfigure}

The results show the varying computational complexity of the different variants of cascade routing. \textsc{Slow} has the highest runtime, and becomes computationally too expensive even when using less than $20$ models. In contrast, standard cascade routing has a significantly lower runtime, and is able to handle up to $40$ models within a $1$ second runtime. Its faster variant, \textsc{Max-Depth}, is able to handle up to $80$ models within a $1$ second runtime. Furthermore, we now also see a clear difference between \textsc{No-Expect} and \textsc{Greedy}. While \textsc{Greedy} remains computationally very cheap even for $80$ models, \textsc{No-Expect} has a significantly higher runtime, even obtaining higher runtimes than \textsc{Max-Depth} for $80$ models. 

Thus, the conclusions from \cref{sec:ablation} are further supported by the runtime analysis: \textsc{Greedy} is the most efficient variant of cascade routing, while \textsc{Normal} is the most efficient variant that does not compromise performance. \textsc{Max-Depth} is a good choice if the optimal depth is known to be less than or equal to $3$ models, as it significantly reduces runtime without compromising performance. Since cascades of more than $3$ models are rare, \textsc{Max-Depth} is a good choice in practice.

\vspace{-2mm}
\section{Detailed Results}\label{sec:appendix-experiments-detailed-results}
We present benchmark-specific AUC values for the experiment performed in \cref{sec:real_world} in \cref{tab:real:classification} for classification and \cref{tab:real:openform} for open-form reasoning. In \cref{fig:cost_quality_tradeoff}, we show the quality-cost tradeoff curves for several benchmarks. The curves are obtained by varying the cost threshold $\lambda$ and plotting the resulting accuracy and cost values in a curve.

\begin{table}
    \centering
    \footnotesize
    \caption{Classification AUC values for each benchmark separately for the experiment performed in \cref{sec:real_world}.}
    \vspace{-2mm}
    \begin{tabular}{
        l
        x{2}{2}
        x{2}{2}
        x{2}{2}|
        x{2}{2}
        x{2}{2}
        x{2}{2}|
        x{2}{2}
        x{2}{2}
        x{2}{2}
    }
    \toprule
    & \multicolumn{3}{c}{\textsc{Llama}} & \multicolumn{3}{c}{\textsc{Gemma}} & \multicolumn{3}{c}{\textsc{Mistral}} \\
    \cmidrule{2-4} \cmidrule{5-7} \cmidrule{8-10}
    & {MMLU} & {ARC} & {MixEval} & {MMLU} & {ARC} & {MixEval} & {MMLU} & {ARC} & {MixEval} \\
    \midrule
    Linear Interp. & 53.82 & 93.15 & 82.86 & 39.40 & 82.28 & 70.97 & 39.76 & 85.39 & 73.03 \\
    Routing &  55.32 & 93.12 & 82.86 & 40.01 & 83.13 & 73.12 & 40.61 & 85.64 & 74.28 \\
    Cascade (Baseline) & 54.80 & 94.08 & 84.15 & 36.43 & 77.53 & 66.10 & 36.99 & 83.88 & 72.73 \\
    \midrule
    Cascade (Ours) & 55.05 & 94.16 & 84.00 & 37.68 & 79.80 & 70.57 & 37.03 & 86.27 & 74.42 \\
    Cascade Routing (Ours)  & 55.40 & 93.90 & 83.91 & 39.93 & 83.74 & 73.16 & 40.56 & 86.52 & 74.64 \\
    \bottomrule
    \end{tabular}
    \label{tab:real:classification}
\end{table}
\begin{table}
    \centering
    \footnotesize
    \caption{Open-form AUC values for each benchmark separately for the experiment performed in \cref{sec:real_world}.}
    \vspace{-2mm}
    \begin{tabular}{
        l
        x{2}{2}
        x{2}{2}|
        x{2}{2}
        x{2}{2}|
        x{2}{2}
        x{2}{2}
    }
    \toprule
    & \multicolumn{2}{c}{\textsc{Llama}} & \multicolumn{2}{c}{\textsc{Gemma}} & \multicolumn{2}{c}{\textsc{Mistral}} \\
    \cmidrule{2-3} \cmidrule{4-5} \cmidrule{6-7}
    & {MMLU} & {GSM8k} & {MMLU} & {GSM8k} & {MMLU} & {GSM8k}  \\
    \midrule
    Linear Interp. & 65.64 & 94.43 & 36.52 & 73.86 & 41.40 & 67.84 \\

    Routing &  65.75 & 94.15 & 38.08 & 75.01 & 43.03 & 68.00 \\

    Cascade (Baseline) & 66.07 & 95.17 & 35.76 & 68.44 & 38.88 & 60.82 \\

    \midrule
    Cascade (Ours) & 66.25 & 94.94 & 38.16 & 71.10 & 40.76 & 64.53 \\

    Cascade Routing (Ours)  & 66.60 & 94.69 & 40.43 & 75.25 & 42.93 & 68.30 \\
    \bottomrule
    \end{tabular}
    \label{tab:real:openform}
\end{table}

\begin{figure*}[t]
    \centering
    \vspace{-3mm}
    \begin{subfigure}[t]{0.33\textwidth}
        \centering
        \includegraphics[width=0.9\textwidth]{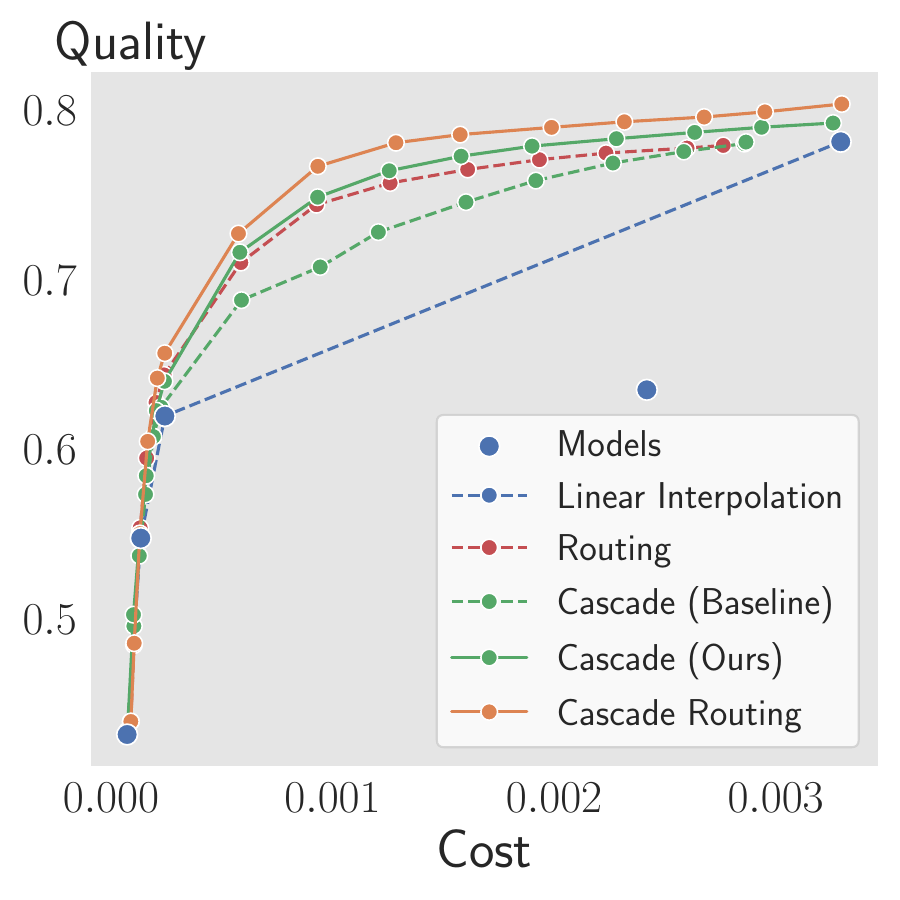}
        \vspace{-2mm}
        \caption{RouterBench, medium noise, 5 models}
        \label{fig:routerbench_diff}
    \end{subfigure}
    \hfill
    \begin{subfigure}[t]{0.33\textwidth}
        \centering
        \includegraphics[width=0.9\textwidth]{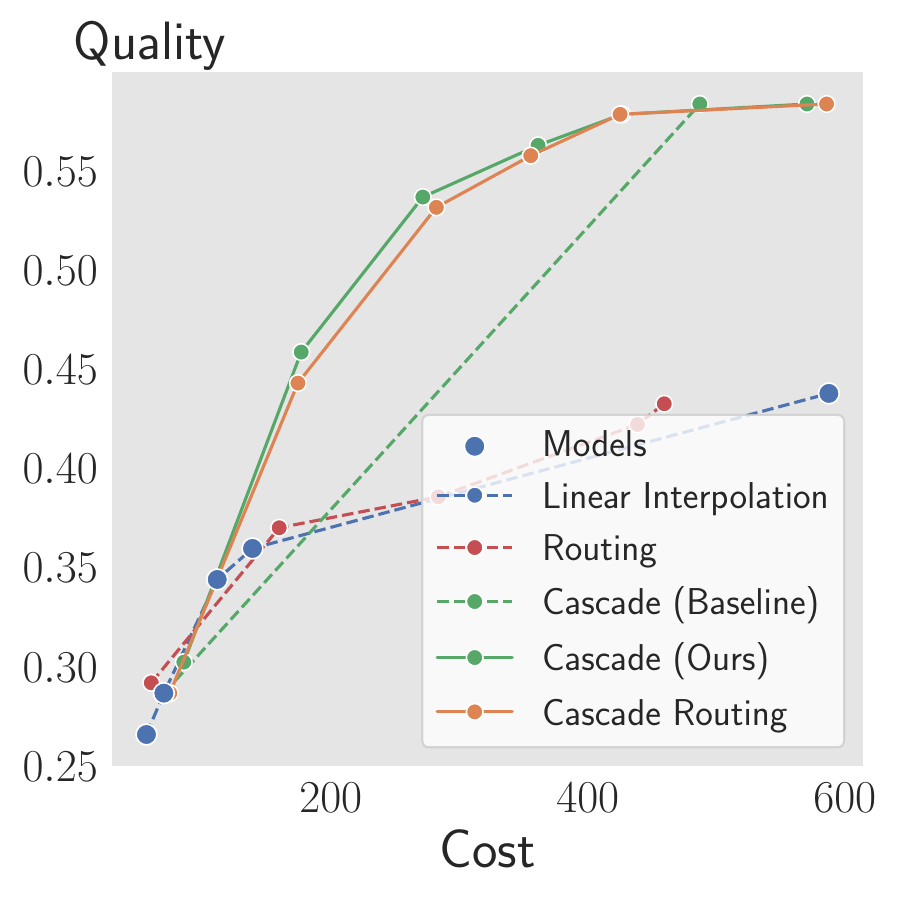}
        \vspace{-2mm}
        \caption{SWE-Bench with 5 models}
        \label{fig:cascading_diff}
    \end{subfigure}
    \hfill
    \begin{subfigure}[t]{0.33\textwidth}
        \centering
        \includegraphics[width=0.9\textwidth]{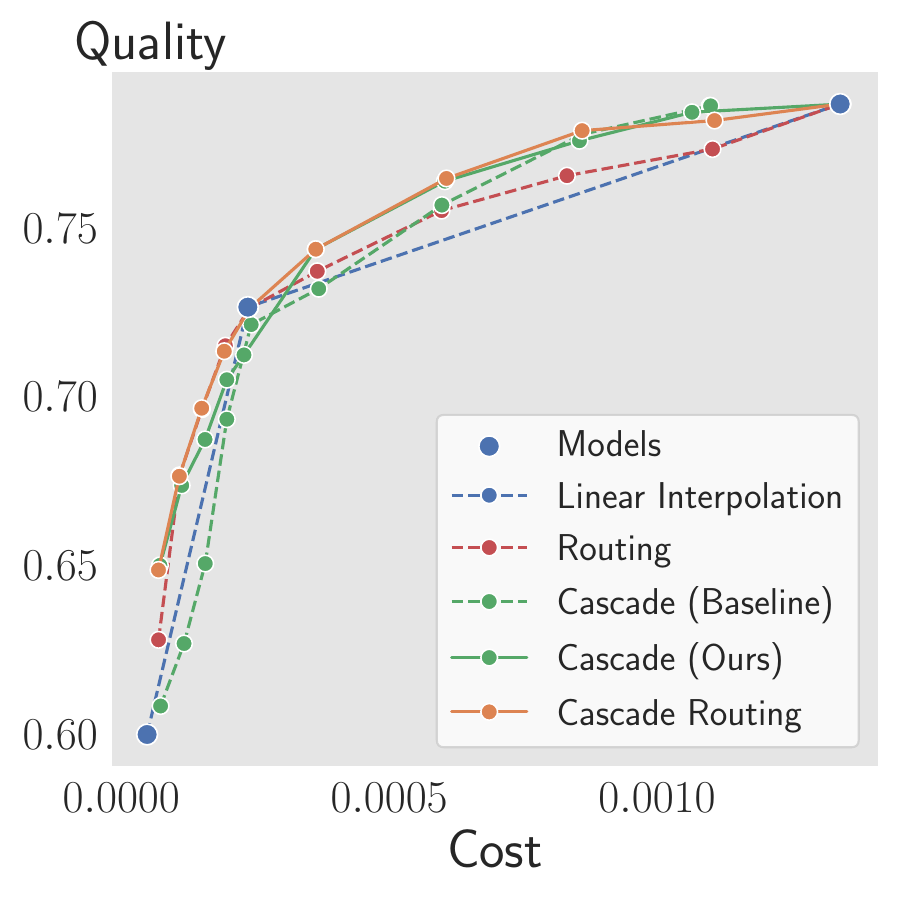}
        \vspace{-2mm}
        \caption{Classification task for the \textsc{Llama} models}
        \label{fig:classification_diff}
    \end{subfigure}
    \vspace{-3mm}
    \caption{Quality-cost tradeoff curves for several benchmarks.}
    \vspace{-4mm}
    \label{fig:cost_quality_tradeoff}

\end{figure*}
\vspace{-2mm}
\section{Additional Results}\label{app:appendix-results}

In \cref{tab:routerbench:appendix} we report the AUC scores for the RouterBench dataset for different noise levels for the five-shot evaluation. Our conclusions presented in \cref{sec:routerbench} remain consistent with the results presented in \cref{tab:routerbench:appendix}. However, there is one notable inconsistency: in two of the three low-noise scenarios, our cascading strategy performs worse than the threshold-based baseline cascade. In the scenario with three models, we find its cause can be found in the more difficult optimization surface for the hyperparameters of our cascading strategy. Specifically, our cascading strategy at some point starts to lose quality as cost increases. By simply setting the hyperparameters of the cascading strategy once it starts to lose quality to the ones where it obtained its highest quality, we obtain a quality of $83.35\%$ over the $83.17\%$ of the baseline cascade.

In contrast, for low-noise and eleven models, a similar approach does not yield a better result. Rather, the discrepancy is caused by a small mismatch between the quality estimates of supermodels and the chosen model. While the quality estimate is based on the expected maximum of all models, we restrict the selected model to be the last model that was computed in the cascade. Since the expected maximum is higher than the quality of the last model, this discrepancy can lead to suboptimal decisions. By allowing both the baseline cascade and our cascading strategy to select the model with the highest quality estimate, we find that our cascading strategy once again outperforms the baseline cascade. Note that this slight discrepancy is not relevant for cascade routing, since the extra restriction is not imposed in this setting.
\begin{table}
    \centering
    \footnotesize
        \caption{AUC scores in $\%$ for different strategies on RouterBench across model and noise levels for five-shot evaluation. Highest numbers are bolded, underlined numbers are within the $95\%$ confidence intervals of the highest number. For a discussion on confidence intervals, we refer to \cref{app:appendix-confidence}.}
        \vspace{-2mm}

    \begin{tabular}{
        l
        x{2}{2}
        x{2}{2}
        x{2}{2}
        x{2}{2}
        x{2}{2}
        x{2}{2}
        x{2}{2}
        x{2}{2}
        x{2}{2}
    }
    \toprule
    & \multicolumn{3}{c}{Three Models} & \multicolumn{3}{c}{Five Models} & \multicolumn{3}{c}{Eleven Models} \\
    \cmidrule(lr){2-4} \cmidrule(lr){5-7} \cmidrule(lr){8-10}
    & {Low} & {Med} & {High} & {Low} & {Med} & {High} & {Low} & {Med} & {High} \\
    \midrule
    Linear Interp. & 74.210281 & 74.210281 & 74.210281 & 73.823577 & 73.823577 & 73.823577 & 75.160194 & 75.160194 & 75.160194 \\
    Routing & 81.495843 & 77.223977 & 76.006404 & 82.425209 & 76.843640 & 75.542914 & 85.341982 & 77.767974 & 76.435299 \\
    Cascade (Baseline) & 83.163363 & 78.58 & 76.89 & 84.271977 & 76.593376 & 73.918097 & 87.143259 & 78.603027 & 74.942672 \\
    \midrule
    Cascade (Ours) & 82.781413 & \underline{\num{78.77}} & \underline{\num{77.01}} & 84.259067 & 77.194329 & 74.298901 & 86.722471 & 78.671225 & 75.081530 \\
    Cascade Routing (Ours)  & \bfseries 83.798999 & \bfseries 78.858862 & \bfseries 77.105493 & \bfseries 85.503886 & \bfseries 78.781568 & \bfseries 76.750747 & \bfseries 88.780025 & \bfseries 80.900848 & \bfseries 78.035322 \\

    \bottomrule
    \end{tabular}
    \vspace{-2mm}
    \label{tab:routerbench:appendix}
\end{table}

\begin{table}
    \centering
    \footnotesize
    \caption{AUC scores on several benchmarks for the \textsc{Mistral} model family. Highest numbers are bolded, underlined numbers are within the $95\%$ confidence intervals of the highest number. For confidence intervals, see \cref{app:appendix-confidence}.}
    \vspace{-2mm}

    \begin{tabular}{
        l
        x{2}{2}
        x{2}{2}
    }
    \toprule
    & {Classification} & {Open-Form} \\
    \midrule
    Linear Interp. & 63.389435 & 53.859567 \\
    Routing &  \underline{\num{64.89}} & \underline{\num{58.71}} \\
    Cascade (Baseline) & 61.202081 & 48.288646 \\
    \midrule
    Cascade (Ours) &  63.311541 & 55.507380 \\
    Cascade Routing (Ours)  & \bfseries 64.970634 & \bfseries 58.728561 \\
    \bottomrule
    \end{tabular}
    \vspace{-2mm}

    \label{tab:real:mistral}
\end{table}
}{}

\message{^^JLASTPAGE \thepage^^J}

\end{document}